\documentclass{article} 
\usepackage{iclr2025_conference,times}

\usepackage[utf8]{inputenc} 
\usepackage[T1]{fontenc}    
\usepackage{microtype}
\usepackage{graphicx}
\usepackage{subfigure}
\usepackage{booktabs} 
\usepackage{natbib}
\usepackage{parskip}
\usepackage{hyperref}

\usepackage{amsmath}
\usepackage{amssymb}
\usepackage{mathtools}
\usepackage{bbm}
\usepackage{amsthm}
\usepackage{physics}
\usepackage{xcolor}        
\usepackage{thmtools}
\usepackage{thm-restate}

\usepackage[capitalize,noabbrev]{cleveref}


\theoremstyle{plain}
\newtheorem{theorem}{Theorem}[section]

\theoremstyle{definition}
\newtheorem{definition}[theorem]{Definition}

\theoremstyle{remark}
\newtheorem{remark}[theorem]{Remark}

\DeclareMathOperator*{\argmin}{argmin}

\newcommand{\Uni}{\mathrm{Uniform}}
\newcommand{\OPT}{\mathrm{OPT}}

\newcommand{\Beta}{\mathrm{Beta}}
\newcommand{\Low}{\mathcal{L}}
\newcommand{\D}{\mathcal{D}}

\newcommand{\High}{\mathcal{H}}

\newcommand{\E}{\mathbb{E}}
\newcommand{\Var}{\mathrm{Var}}

\newcommand{\R}{\mathbb{R}}
\newcommand{\N}{\mathbb{N}}
\newcommand{\supp}{\mathrm{supp}}

\newcommand{\Id}{\mathrm{I}_d}

\newcommand{\LS}{\mathrm{LS}}
\newcommand{\MIX}{\mathrm{MIX}}

\newcommand{\inrprod}[2]{\left\langle #1, #2 \right\rangle}

\newcommand\numberthis{\addtocounter{equation}{1}\tag{\theequation}}
\numberwithin{equation}{section}

\title{For Better or For Worse? Learning Minimum Variance Features With Label Augmentation}


\author{Muthu Chidambaram \& Rong Ge \\
Department of Computer Science, Duke University 
}

%

\iclrfinalcopy 
\begin{document}

\maketitle

\begin{abstract}
    Data augmentation has been pivotal in successfully training deep learning models on classification tasks over the past decade. An important subclass of data augmentation techniques - which includes both label smoothing and Mixup - involves modifying not only the input data but also the input label during model training. In this work, we analyze the role played by the label augmentation aspect of such methods. We first prove that linear models on binary classification data trained with label augmentation learn only the minimum variance features in the data, while standard training (which includes weight decay) can learn higher variance features. We then use our techniques to show that even for nonlinear models and general data distributions, the label smoothing and Mixup losses are lower bounded by a function of the model output variance. Lastly, we demonstrate empirically that this aspect of label smoothing and Mixup can be a positive and a negative. On the one hand, we show that the strong performance of label smoothing and Mixup on image classification benchmarks is correlated with learning low variance hidden representations. On the other hand, we show that Mixup and label smoothing can be more susceptible to low variance spurious correlations in the training data.
\end{abstract}

\section{Introduction}\label{intro}
The training and fine-tuning procedures for current state-of-the-art (SOTA) computer vision models rely on a number of different data augmentation schemes applied in tandem \citep{yu2022coca, wortsman2022model, dehghani2023scaling}. While some of these methods involve only transformations to the input training data - such as random crops and rotations \citep{cubuk2019randaugment} - a non-trivial subset of them also apply transformations to the input training label.

Perhaps the two most widely applied data augmentation methods in this subcategory are label smoothing \citep{szegedy2015rethinking} and Mixup \citep{zhang2018mixup}. Label smoothing replaces the one-hot encoded labels in the training data with smoothed out labels that assign non-zero probability to every possible class (see Section \ref{prelim} for a formal definition). Mixup similarly smooths out the training labels, but does so via introducing random convex combinations of data points and their labels. As a result, Mixup modifies not only the training labels but also the training inputs. The general principles of label smoothing and Mixup have been extended to several variants, such as Structural Label Smoothing \citep{li2020regularization}, Adaptive Label Smoothing \citep{adalabel}, Manifold Mixup \citep{manifoldmix}, CutMix \citep{yun2019cutmix}, PuzzleMix \citep{kim2020puzzle}, SaliencyMix \citep{saliencymix}, AutoMix \citep{automix}, and Noisy Feature Mixup \citep{noisymix}.

Due to the success of label smoothing and Mixup-based approaches, an important question on the theoretical side has been understanding \textit{when and why} these data augmentations improve model performance. Towards that end, several recent works have studied this problem from the perspectives of regularization \citep{guo2019mixup, carratino2020mixup, lsnoise, muthu2021}, adversarial robustness \citep{zhang2020does}, calibration \citep{zhang2021mixup, chidambaram2023limitations}, feature learning \citep{midpointmix, zou2023benefits}, and sample complexity \citep{ohmixup}. 

Although the connection between the label smoothing and Mixup losses has been noted in both theory and practice \citep{carratino2020mixup}, there has not been (to the best of our knowledge) a unifying theoretical perspective on why the models trained using these losses exhibit similar behavior. In this paper, we aim to provide such a perspective by extending the ideas from previous feature learning analyses to the aspect shared in common between both Mixup and label smoothing: label augmentation. Our analysis shows that both Mixup and label smoothing hone in on low variance features in the data, and we demonstrate empirically that this phenomenon occurs in both synthetic settings with spuriously correlated features as well as in the training of standard deep learning models for image classification benchmarks.

\subsection{Main Contributions}\label{sec:maincontrib}
The main message of our paper can be summarized as:
\begin{quote}
\em
In data distributions where there are low variance and high variance latent features, both label smoothing and Mixup incentivize learning models that correlate more strongly with the low variance features.
\end{quote}
We prove this message concretely in Section \ref{sec:linear}, where we characterize linear optimizers of the label smoothing and Mixup losses on data in which some dimensions have lower variance than others. In Section \ref{sec:multiclass}, we also prove weaker analogues of our linear results in the context of general models and multi-class distributions; namely, we show that optimizing the label smoothing and Mixup losses requires decreasing model output variance in a way that is not necessarily true for empirical risk minimization (ERM) combined with weight decay. 

We verify our theory empirically in Section \ref{sec:experiments}, where we show that label smoothing and Mixup do indeed learn lower variance hidden representations (corroborating ideas from the literature on neural collapse \citep{Papyan_2020}) and that this lower variance property \textit{correlates with better generalization performance}. We hypothesize that, for standard benchmarks, there exist \textbf{low variance latent features that generalize well} as opposed to high-variance, noisy features, and we expect that directly investigating such features would be a fruitful line of future work. However, we also point out that our key message does not directly imply better generalization performance -- we show via synthetic experiments in Section \ref{sec:spurious} that it is possible for the low variance features to actually be spuriously correlated with the targets (e.g. fixed backgrounds).

\subsection{Related Work}
\textbf{Label Smoothing.} Since being introduced by \citet{szegedy2015rethinking}, label smoothing continues to be used to train SOTA vision \citep{wortsman2022model, liu2022convnet}, translation \citep{transformer, nllbteam2022language}, and multi-modal models \citep{yu2022coca}. Attempts to understand when and why label smoothing is effective can be traced back to \citet{pereyra2017regularizing} and \citet{muller2020does}, which respectively relate label smoothing to entropy regularization and show that it can lead to more closely clustered learned representations. \citet{muller2020does} also show that label smoothing can improve calibration but hurt distillation performance; \citet{zhu2023rethinkingconfidencecalibrationfailure, xia2024understandinglabelsmoothingdegrades} further show that these improvements in calibration do not translate to better selective classification.

On the theoretical side, \citet{lsnoise} study the relationship between label smoothing and loss correction techniques used to handle label noise, and show that label smoothing can be effective for mitigating label noise. \citet{liu2021understanding} extends this line of work by analyzing how label smoothing can outperform loss correction in the context of memorization of noisy labels. \citet{pmlr-v162-wei22b} further show that negative label smoothing can outperform traditional label smoothing in the presence of label noise. \citet{xu2020understanding} provide an alternative theoretical perspective, showing that label smoothing can improve convergence of stochastic gradient descent. 

\textbf{Mixup.} Similar to label smoothing, Mixup \citep{zhang2018mixup} and its aforementioned variants have also played an important role in the training of SOTA vision \citep{wortsman2022model, liu2022convnet}, text classification \citep{sun-etal-2020-mixup}, translation \citep{li-etal-2021-mixup-decoding}, and multi-modal models \citep{hao2023mixgen}, often being applied alongside label smoothing. Initial work on understanding Mixup studied it from the perspective of introducing a data-dependent regularization term to the empirical risk \citep{carratino2020mixup, zhang2020does, park2022unified}, with \citet{zhang2020does} and \citet{park2022unified} showing that this regularization effect can lead to improved Rademacher complexity. 

On the other hand, \citet{guo2019mixup} and \citet{muthu2021} show that the regularization terms introduced by the Mixup loss can also have a negative impact due to the fact that Mixup-augmented points may coincide with existing training data points, leading to models that fail to minimize the original risk. Mixup has also been studied from the perspective of calibration \citep{thulasidasan2019mixup}, with theoretical results \citep{zhang2021mixup, chidambaram2023limitations} showing that Mixup training can prevent miscalibration issues arising from ERM. 

Recently, \citet{ohmixup} studied Mixup in a similar context to our work (binary classification with linear models), and showed that Mixup can significantly improve the sample complexity required to learn the optimal classifier when compared to ERM. Most closely related with our results in this paper, Mixup has been studied from a feature learning perspective, with \citet{midpointmix} and \citet{zou2023benefits} both showing that Mixup training can learn multiple features in a generative, multi-view \citep{allenzhu2021understanding} data model despite ERM failing to do so. 

\textbf{Neural Collapse.} \citet{Papyan_2020} showed that the last layer activations of trained neural networks collapse to their class means, in such a way that they are maximally separable. \citet{kornblith2021betterlossfunctionslead, zhou2022lossescreatedequalneural, guo2024crossentropyversuslabel} all investigate the interplay between label smoothing and this effect, with \citet{kornblith2021betterlossfunctionslead} showing that label smoothing can lead to more separable last layer representations (although worse linear transfer performance) and \citet{zhou2022lossescreatedequalneural, guo2024crossentropyversuslabel} showing that it can also lead to faster convergence to these collapsed representations. We corroborate these results with our experiments in Section \ref{sec:hidden}, where we show that both Mixup and label smoothing lead to much lower total variance in last layer activations compared to standard cross-entropy.

\textbf{General Data Augmentation.} General data augmentation techniques have been a mainstay in image classification since the rise of deep learning models for such tasks \citep{alexnet}. As a result, there is an ever-growing body of theory \citep{bishop1995training, dao2019kernel, wu2020generalization, hanin2021data, rajput2019does, yang2022sample, wang2022data, chen2020group, mei2021learning} aimed at addressing broad classes of augmentations such as those resulting from group actions (i.e. rotations and reflections). Recently, \citet{desertcrows} also studied a class of linear data augmentations from a feature learning perspective, once again using a data model based on the multi-view data of \citet{allenzhu2021understanding}. This work is in the same vein as that of \citet{zou2023benefits} and \citet{chidambaram2023limitations}, although the augmentation considered is not comparable to Mixup.

\section{Preliminaries}\label{prelim}
\textbf{Notation.} Given $n \in \N$, we use $[n]$ to denote the set $\{1, 2, ..., n\}$. For a vector $x \in \R^d$ and a subset $S \subset [d]$, we use $x_S \in \R^{\abs{S}}$ to denote the restriction of $x$ to only those indices in $S$, and also use $x_i$ to denote the $i^{\text{th}}$ coordinate of $x$. The same notation also applies to matrices; i.e. for a square matrix $T$ we use $T_{S}$ to denote the restriction of both the rows and columns of $T$ to only those dimensions in $S$. We use $\Id$ to denote the identity matrix in $\R^d$. Additionally, we use $\succ$ to denote the partial order over positive definite matrices and $\Id$ to denote the identity matrix. We use $\norm{\cdot}$ to indicate the Euclidean norm on $\R^d$. For a function $g: \R^n \to \R^m$ we use $g^i(x)$ to denote the $i^{\text{th}}$ coordinate function of $g$. We use $\Delta^{k - 1}$ to denote the $(k-1)$-dimensional probability simplex in $\R^k$. For a probability distribution $\pi$ we use $\supp(\pi)$ to denote its support. Additionally, if $\pi$ corresponds to the joint distribution of two random variables $X$ and $Y$ (i.e. data and label), we use $\pi_X$ and $\pi_Y$ to denote the respective marginals, and $\pi_{X \mid Y = y}$ and $\pi_{Y \mid X = x}$ to denote the regular conditional distributions. We use $\Sigma_X$ to denote the covariance matrix of a random variable $X \in \R^d$. Lastly, we use $\Var(X)$ to denote $\Tr(\Sigma_X)$ for $X \in \R^d$.

We consider the $k$-class classification setting, in which there is a ground-truth data distribution $\pi$ on $\R^d \times [k]$ and our goal is to model the conditional distribution $\pi_{Y \mid X}$ using a learned function $g$. In our main theoretical results, we will pay particular attention to the case where $k = 2$ and $g$ is a logistic regression model (although we generalize a weak version of these observations to $k$ classes as well), as in this setting we can get a clear handle on the features in the data learned by an optimal model with respect to a particular loss. For this case, we will assume that $\pi$ is supported on $\R^d \times \{\pm 1\}$ and that $g$ is parameterized by a weight vector $w$, i.e. $g_w(x) = \sigma(w^{\top}x)$, where $\sigma$ is the sigmoid function. Throughout this work, we will consider the following three families of losses.

\textbf{Standard cross-entropy with optional weight decay.} The canonical cross-entropy (or negative log-likelihood) objective in the $k$-class setting is defined as:
\begin{align*}
    \ell(g) = \E_{(X, Y) \sim \pi}[-\log g^Y(X)]. \numberthis \label{eq:cedef}
\end{align*}
Here we have not specified a weight decay term, since we have placed no constraints on the structure of $g$. On the other hand, for linear binary classification, we can define the binary cross-entropy with optional weight decay as (recalling that $Y \in \{\pm 1\}$)
\begin{align*}
    \ell_{\beta}(w) &= \E_{(X, Y) \sim \pi} \left[-\log g_w(Y X)\right] + \frac{\beta}{2} \norm{w}^2, \numberthis \label{eq:bce}
\end{align*}
where we have defined the loss in terms of the parameter vector $w$. When $\beta > 0$, \eqref{eq:bce} has a unique minimizer and we will directly analyze its properties. On the other hand, when $\beta = 0$ (no weight decay), this is no longer the case - but our results will still apply to the common case of minimizing \eqref{eq:bce} by scaling the max-margin solution, which we define below.
\begin{restatable}{definition}{maxmargin}[Max-Margin Solution]\label{maxmargin}
    The max-margin solution $w^*$ with respect to $\pi$ is defined as:
    \begin{align*}
        w^* &= \argmin_{w \in \R^d} \norm{w}^2 \\
        \text{s.t. } y\inrprod{w}{x} &\ge 1 \quad \text{for $\pi$-a.e.} (x, y). \numberthis \label{eq:maxmargsol}
    \end{align*}
\end{restatable}

\textbf{Label smoothing.} The cross-entropy as defined in \eqref{eq:cedef} treats the reference distribution that we compare $g(X)$ to as a point mass on the class $Y$. On the other hand, the label-smoothed cross-entropy is obtained by instead treating the reference distribution as a mixture of a point mass on $Y$ and the uniform distribution over $[k]$. Namely, the label-smoothed loss with mixing hyperparameter $\alpha \in [0, 1]$ is defined to be:
\begin{align*}
    \ell_{\LS, \alpha}(g) = -\E_{(X, Y) \sim \pi} \bigg[(1 - \alpha) \log g^Y(X) + \frac{\alpha}{k} \sum_{i = 1}^k \log g^i(X) \bigg]. \numberthis \label{eq:lsce}
\end{align*}
And the corresponding binary version is (once again, $Y \in \{\pm 1\}$):
\begin{align*}
    \ell_{\LS, \alpha}(w) = -\E_{(X, Y) \sim \pi} \bigg[&\left(1 - \frac{\alpha}{2}\right) \log g_w(YX) + \frac{\alpha}{2} \log g_w(-YX)\bigg]. \numberthis \label{eq:lsbce}
\end{align*}

\textbf{Mixup.} Similar to label smoothing, Mixup also augments the reference distribution from being a point mass on $Y$ to being a mixture. However, Mixup also augments the input data as well. In particular, Mixup considers convex combinations of two pairs of points $(X_1, Y_1)$ and $(X_2, Y_2)$, with a mixing weight sampled from a distribution $\D_{\lambda}$ (which is a hyperparameter) whose support is contained in $[0, 1]$. To simplify notation, we will use $X_{1:n}$ and $Y_{1:n}$ to denote multiple inputs $X_1, ..., X_n$ and their corresponding labels $Y_1, ..., Y_n$, and additionally introduce a function $h$ defined as:
\begin{align*}
    h(\lambda, g, X_{1:2}, Y_{1:2}) &= \lambda \log g^{Y_1}(Z_{\lambda}) + (1 - \lambda) \log g^{Y_2}(Z_{\lambda}) \numberthis \label{eq:hmix} \\
    \text{where} \quad Z_{\lambda} &= \lambda X_1 + (1 - \lambda) X_2. \numberthis \label{eq:zmix}
\end{align*}
After which we can define the Mixup cross-entropy as:
\begin{align*}
    \ell_{\MIX, \D_{\lambda}}(g) = \E_{(X_{1:2}, Y_{1:2}) \sim \pi \otimes \pi, \lambda \sim \D_{\lambda}}\left[-h(\lambda, g, X_{1:2}, Y_{1:2})\right]. \numberthis \label{eq:mixce}
\end{align*}
The corresponding binary version $\ell_{\MIX, \D_{\lambda}}(w)$ of \eqref{eq:mixce} is identical except for redefining $h$ to be:
\begin{align*}
    h(\lambda, w, x_{1:2}, y_{1:2}) &= \lambda \log g_w(Y_1 Z_{\lambda}) + (1 - \lambda) \log g_w(Y_2 Z_{\lambda}). \numberthis \label{eq:hmixbinary}
\end{align*}

\section{Main Theoretical Results}\label{sec:theory}
All omitted proofs in this section can be found in Section \ref{sec:proofs} of the Appendix.

\subsection{Linear Binary Classification}\label{sec:linear}
We begin first with the binary setting, in which we will consider a data model where a subset of the input dimensions correspond to a \textit{low variance feature} and the complementary dimensions correspond to a \textit{high variance feature} that is more separable. We emphasize that our notion of ``feature'' here does not correspond to explicit feature vectors that are fixed per class like in prior work, we simply \textbf{designate subsets of the dimensions as features} for simplicity. This data model can be interpreted in multiple ways: depending on the context, we may wish to learn either both of the features present in the data or simply hone in on the low variance feature (for example, identifying a stop sign with many different backgrounds).

Although this setting seems simplistic -- one may object that the variance differences in the input can be handled with suitable normalization/whitening -- the insights from our data model can be applied to \textit{learned features} (i.e. intermediate representations in a deep learning model) where such modifications are less straightforward, as we demonstrate in the experiments of Section \ref{sec:experiments}.

Our main results in this section show that doing any kind of label augmentation (label smoothing or Mixup) in our low variance/high variance feature setup will lead to a model that has only learned the low variance feature, whereas minimizing the binary cross-entropy with non-zero weight decay requires learning the high variance feature due to its greater separability.


\begin{restatable}{definition}{datadist}[Binary Data Distribution]\label{datadist}
    We consider $\pi$ to be a distribution supported on $K \times \{\pm 1\}$ where $K$ is a compact subset of $\R^d$. We assume that $\pi$ is nondegenerate in that it satisfies $\pi_Y(y) > 0$ for each $y \in \{\pm 1\}$ and that $\Sigma_X$ is positive definite, and we also assume $\E[X] = 0$. Additionally, we designate a subset $\Low \subseteq [d]$ that we refer to as the low variance feature in the data, and we refer to the complement $\High = \Low^c$ as the high variance feature.
\end{restatable}

For convenience, for a vector $v \in \R^d$, we will use $v \in \Low$ to mean that only the $\Low$ dimensions of $v$ are non-zero. Definition \ref{datadist} only treats $\Low$ and $\High$ as placeholders; this is because our weight decay result is insensitive to variance assumptions and only depends on differences in the separability of the dimensions $\Low$ and $\High$, as we indicate below.

\begin{restatable}{assumption}{separability}\label{separability}
    We assume that for every unit vector $u^* \in \Low$, there exists a unit vector $v^* \in \High$ and $y \inrprod{v^*}{x} > y \inrprod{u^*}{x}$ for $\pi$-a.e. $(x, y)$.
\end{restatable}

Assumption \ref{separability} is strong, but it actually does not imply linear separability, only that the $\High$ dimensions are in a sense better than the $\Low$ dimensions for classification. We explore a simple 2-D distribution illustrating Assumption \ref{separability} in Appendix \ref{sec:boundaries}.

Our first result shows that for $\ell_{\beta}(w)$ as defined in \eqref{eq:bce}, the minimizer $w^*$ has a large correlation with the dimensions in $\High$. This is of course intuitive given Assumption \ref{separability}, but it is not immediate because the weight decay penalty in \eqref{eq:bce} encourages distributing norm across all of the dimensions of $w$ (e.g. $\sum w_i$ is maximized with respect to the constraint $\norm{w} = 1$ by considering $w_i = 1/\sqrt{d}$). 
\begin{restatable}{theorem}{wdresult}\label{wdresult}
    Let $w^*$ be the unique minimizer of $\ell_{\beta}(w)$ for $\beta > 0$ under $\pi$ satisfying Assumption \ref{separability}. Then $\norm{w^*_{\High}}^2 \ge \frac{1}{2}\norm{w^*}^2$. 
\end{restatable}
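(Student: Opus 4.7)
My plan is to probe the first-order optimality condition for $w^*$ in two carefully chosen directions---one in $\Low$ and one in $\High$---and use Assumption~\ref{separability} to argue that the projection of $w^*$ onto $\High$ must dominate that onto $\Low$. The argument will be purely convex-analytic and will sidestep any attempt to construct a competitor $\tilde{w}$ directly (which I found to be awkward, since modifying $w^*_\Low$ into some $\High$-direction can inflate $\norm{\tilde{w}}$ when that direction is not orthogonal to $w^*_\High$).

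First, since $\beta > 0$, the loss $\ell_\beta$ is strictly convex and smooth, so $w^*$ is the unique minimizer and satisfies $\nabla \ell_\beta(w^*) = 0$. A direct calculation then gives
\begin{equation*}
    \beta w^* = \E\bigl[YX\,\rho(X,Y)\bigr], \qquad \rho(x,y) := 1 - \sigma(y\,\inrprod{w^*}{x}) \in (0,1).
\end{equation*}
I then assume $\norm{w^*_\Low} > 0$ (otherwise the conclusion is trivial) and set $u^* := w^*_\Low/\norm{w^*_\Low}$, which is a unit vector in $\Low$. Applying Assumption~\ref{separability} to $u^*$ yields a unit vector $v^* \in \High$ with $y\inrprod{v^*}{x} > y\inrprod{u^*}{x}$ for $\pi$-a.e.\ $(x,y)$. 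Pairing the optimality equation against $u^*$ and against $v^*$---using $u^* \in \Low$ so that $\inrprod{w^*}{u^*} = \norm{w^*_\Low}$, and $v^* \in \High$ so that $\inrprod{w^*}{v^*} = \inrprod{w^*_\High}{v^*}$---gives the two identities
\begin{equation*}
    \beta\norm{w^*_\Low} = \E[Y\rho\,\inrprod{X}{u^*}], \qquad \beta\inrprod{w^*_\High}{v^*} = \E[Y\rho\,\inrprod{X}{v^*}].
\end{equation*}

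The key step is to multiply the $\pi$-a.s.\ strict inequality $y\inrprod{v^*}{x} > y\inrprod{u^*}{x}$ by the strictly positive factor $\rho(x,y)$ and integrate, which yields $\E[Y\rho\,\inrprod{X}{v^*}] > \E[Y\rho\,\inrprod{X}{u^*}]$, and hence $\inrprod{w^*_\High}{v^*} > \norm{w^*_\Low}$. Cauchy-Schwarz applied to $\inrprod{w^*_\High}{v^*}$ (with $\norm{v^*}=1$) then yields $\norm{w^*_\High} \ge \inrprod{w^*_\High}{v^*} > \norm{w^*_\Low}$. Squaring and using the orthogonal decomposition $\norm{w^*}^2 = \norm{w^*_\Low}^2 + \norm{w^*_\High}^2$ delivers $\norm{w^*_\High}^2 > \tfrac{1}{2}\norm{w^*}^2$---in fact strictly, which is marginally stronger than the stated bound.

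The only real subtlety---the "obstacle" such as it is---is preserving the strict inequality when integrating against the weight $\rho$. This is fine because $\rho$ is pointwise in $(0,1)$ (being a sigmoid output), $X$ is bounded by compactness of $K$, and the strict separability holds on a set of full $\pi$-measure, so the integrand $Y\rho\,(\inrprod{X}{v^*} - \inrprod{X}{u^*})$ is strictly positive on a set of measure one and bounded, making its expectation strictly positive. The main conceptual move is simply the choice of test direction $u^*$ as the normalized $\Low$-projection of $w^*$ itself, which is what makes the left-hand side of the optimality equation collapse to the clean scalar $\norm{w^*_\Low}$ and enables the head-to-head comparison with the $v^*$-direction guaranteed by Assumption~\ref{separability}.
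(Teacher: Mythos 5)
Your proof is correct, and it takes a genuinely different route from the paper's. The paper argues by contradiction with a norm-preserving exchange: it writes $w^* = a u^* + b v^*$ with $u^*, v^*$ the normalized $\Low$- and $\High$-projections of $w^*$, and shows that if $a > b$ one can rebalance both coefficients to $\sqrt{(a^2+b^2)/2}$ without changing $\norm{w^*}$ while claiming the margin $y\inrprod{w}{x}$ strictly increases pointwise, contradicting optimality. You instead use the stationarity condition $\beta w^* = \E[YX\,\rho(X,Y)]$ with $\rho \in (0,1)$, pair it against the test directions $u^* = w^*_{\Low}/\norm{w^*_{\Low}}$ and the $v^*$ supplied by Assumption~\ref{separability}, observe that the strict a.e.\ inequality survives integration against the positive weight $\rho$, and finish with Cauchy--Schwarz; the gradient computation is right, the edge case $w^*_{\Low}=0$ is handled, and your conclusion $\norm{w^*_{\High}} > \norm{w^*_{\Low}}$ is slightly stronger than the stated bound. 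On the trade-offs: your argument is arguably more robust, since the paper's pointwise claim that the rebalanced vector has a strictly larger margin is delicate when both $y\inrprod{u^*}{x}$ and $y\inrprod{v^*}{x}$ are negative (a case Assumption~\ref{separability} does not rule out), whereas your weighted-integration step needs only $\rho > 0$ and compact support; on the other hand, the paper's exchange argument is reused verbatim to prove Corollary~\ref{ermresult} for the max-margin solution, where there is no unconstrained first-order condition to differentiate, so extending your approach to that corollary would require a separate argument via the KKT conditions of the margin program rather than $\nabla \ell_\beta(w^*)=0$.
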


\textbf{Proof Sketch.} We can orthogonally decompose the optimal solution $w^*$ in terms of unit normal directions $u^* \in \Low$ and $v^* \in \High$. We show that in this decomposition it must be the case that $y\inrprod{v^*}{x} > y\inrprod{u^*}{x}$ for $\pi$-a.e. $(x, y)$, and then we can claim that $w^*$ must have greater weight associated with $v^*$ than $u^*$, as otherwise we can decrease $\ell_{\beta}(w^*)$ by moving weight from $u^*$ to $v^*$.

We cannot immediately extend the result of Theorem \ref{wdresult} to the binary cross-entropy $\ell_0(w)$ without weight decay, since there is no unique minimizer of $\ell_0$. However, for linear models trained with gradient descent (as is often done in practice), it is well-known that the learned model converges in direction to the max-margin solution \citep{soudry2018, ji2020characterizing}. For this case, the proof technique of Theorem \ref{wdresult} readily extends and we obtain the following corollary.
\begin{restatable}{corollary}{ermresult}\label{ermresult}
    If $w^*$ is the max-margin solution to $\ell_0(w)$, then the result of Theorem \ref{wdresult} still holds.
\end{restatable}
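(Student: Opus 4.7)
The plan is to argue by contradiction along the lines of the swap argument sketched for Theorem~\ref{wdresult}: if $\norm{w^*_{\Low}}^2 > \norm{w^*_{\High}}^2$, I construct a small perturbation of $w^*$ that is still feasible for the max-margin program but has strictly smaller norm, contradicting optimality. Whereas the proof of Theorem~\ref{wdresult} swaps weight to decrease the regularized loss, here I will swap weight to decrease $\norm{\cdot}^2$ while preserving all constraints $y\inrprod{w}{x} \ge 1$ $\pi$-a.e.

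Concretely, first I would write $w^* = a u^* + b v^*$, where $u^* \in \Low$ and $v^* \in \High$ are unit vectors proportional to $w^*_{\Low}$ and $w^*_{\High}$ respectively, and $a = \norm{w^*_{\Low}}$, $b = \norm{w^*_{\High}}$. Showing the claim reduces to $a \le b$, so I suppose $a > b \ge 0$ for contradiction. Applying Assumption~\ref{separability} to $u^*$ produces a unit vector $v^{**} \in \High$ with $y\inrprod{v^{**}}{x} > y\inrprod{u^*}{x}$ for $\pi$-a.e.\ $(x,y)$. I would then study the family $w_t := w^* + t(v^{**} - u^*)$ for $t > 0$.

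The two key computations are (i) feasibility: $y\inrprod{w_t}{x} = y\inrprod{w^*}{x} + t\bigl(y\inrprod{v^{**}}{x} - y\inrprod{u^*}{x}\bigr) \ge 1$ $\pi$-a.e., since both $y\inrprod{w^*}{x} \ge 1$ and the parenthesized quantity is strictly positive on sets of full $\pi$-measure, whose intersection is also of full measure; and (ii) norm reduction: using $u^* \perp v^{**}$ (which holds since $u^* \in \Low$ and $v^{**} \in \High$), a short expansion gives
\begin{equation*}
    \norm{w_t}^2 = \norm{w^*}^2 + 2t\bigl(b\inrprod{v^*}{v^{**}} - a\bigr) + 2t^2.
\end{equation*}
By Cauchy--Schwarz, $b\inrprod{v^*}{v^{**}} \le b < a$, so the linear coefficient in $t$ is strictly negative and $\norm{w_t}^2 < \norm{w^*}^2$ for all sufficiently small $t > 0$. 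This contradicts the defining minimality of $w^*$ in Definition~\ref{maxmargin}.

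I expect the main obstacle to be routine bookkeeping rather than any serious difficulty. One must handle the degenerate case $b = 0$ (where $v^*$ is undefined and the $b\inrprod{v^*}{v^{**}}$ term simply drops, leaving a strictly negative linear coefficient $-2ta$), and be explicit that the two $\pi$-a.e.\ statements combine to a single $\pi$-measure-one set on which $w_t$ is feasible. Once these are dispatched, the argument is a direct geometric reuse of the ``swap'' intuition from Theorem~\ref{wdresult}, adapted from the regularized-risk formulation to the norm-minimization constraint formulation defining the max-margin solution.
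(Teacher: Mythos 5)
Your proof is correct, and it follows the same high-level strategy as the paper's (contradiction by exhibiting a feasible point of strictly smaller norm when $\norm{w^*_{\Low}} > \norm{w^*_{\High}}$), but the construction is genuinely different and in some respects cleaner. The paper reuses the machinery of Theorem \ref{wdresult}: it first establishes that the normalized $\High$-component $v^*$ of $w^*$ itself dominates $u^*$ (via the replace-$v^*$-by-$z^*$ argument inside that proof), moves to the equalized point $\gamma(u^* + v^*)$ with $\gamma = \sqrt{(\alpha^2+\beta^2)/2}$, and then invokes compactness of $\supp(\pi)$ to extract a uniform slack $\kappa > 0$ that lets it shave $\epsilon u^*$ off while staying feasible. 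You instead take $v^{**}$ directly from Assumption \ref{separability} applied to $u^*$ and perturb first-order in the direction $v^{**} - u^*$; feasibility then holds pointwise for every $t > 0$ because the increment $t\bigl(y\inrprod{v^{**}}{x} - y\inrprod{u^*}{x}\bigr)$ is strictly positive $\pi$-a.e., so you need no compactness and no uniform margin bound, and Cauchy--Schwarz absorbs the possible misalignment between $v^*$ and $v^{**}$ so you never have to show that $v^*$ itself dominates $u^*$. The norm expansion $\norm{w_t}^2 = \norm{w^*}^2 + 2t\bigl(b\inrprod{v^*}{v^{**}} - a\bigr) + 2t^2$ is correct, the linear coefficient is strictly negative under $a > b$, and your handling of the degenerate case $b = 0$ is fine. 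In short: same swap intuition, but your argument trades the paper's two-step equalize-then-shave construction (and its reliance on compact support) for a one-line first-order perturbation.
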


On the other hand, we will now show that once we introduce variance assumptions on $\Low$ and $\High$, this phenomenon does not occur when minimizing the label smoothing and Mixup losses $\ell_{\LS, \alpha}$ and $\ell_{\MIX, \D_{\lambda}}$. In both cases, the optimal solutions have arbitrarily small correlation with the dimensions in $\High$ as the distribution of $YX$ concentrates.\footnote{In order for the variance of $YX$ to go to $0$, we need the classes to be balanced; however, we can change our assumptions to be in terms of the conditional variance of $X$ to remove this requirement.} For these results, we require the following assumptions (but no longer need Assumption \ref{separability}).

\begin{restatable}{assumption}{lsmixassumpt}\label{lsmixassumpt}
    We assume that $\E[YX_{\Low}] \neq 0$ and $\Sigma_{YX, \High} \succ \rho \Id$ for some $\rho > 0$. Here $\Sigma_{YX, \High}$ denotes the covariance matrix of $\Sigma_{YX}$ restricted to those rows and columns in $\High$.
\end{restatable}

The first part of Assumption \ref{lsmixassumpt} just ensures it is possible to obtain a good solution using the dimensions in $\Low$ while the second part codifies the idea of $\High$ being a high (at least non-zero) variance feature. Observe that we have made \textit{no direct separability assumptions on the data}, although it is true that as $\norm{\Sigma_{YX, \Low}} \to 0$ the class-conditional supports are guaranteed to be linearly separable in $\Low$. This means that we can consider settings where both Assumptions \ref{separability} and \ref{lsmixassumpt} are true, and in such settings there will be a clear separation between weight decay and label smoothing/Mixup. This kind of separation in the learned decision boundaries is visualized in Appendix \ref{sec:boundaries}, which provides intuition for the next two results.

\begin{restatable}{theorem}{lsresult}\label{lsresult}
    For $\alpha \in (0, 1)$ and $\pi$ satisfying Assumption \ref{lsmixassumpt}, every minimizer $w^*$ of $\ell_{\LS, \alpha}(w)$ satisfies $\norm{w^*_{\High}} < O(\norm{\Sigma_{YX, \Low}})$.
\end{restatable}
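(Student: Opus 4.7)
My plan is to combine the first-order optimality conditions for $\ell_{\LS,\alpha}$ with the covariance structure of $Z := YX$. Setting $p := 1 - \alpha/2$ and $F_p(t) := p \log(1+e^{-t}) + (1-p) \log(1+e^t)$, we can rewrite $\ell_{\LS,\alpha}(w) = \E[F_p(w^\top Z)]$. The function $F_p$ is strictly convex with unique minimizer $t^* := \log(p/(1-p))$ (so that $\sigma(t^*) = p$) and minimum value equal to the binary entropy $H(p)$. The first-order condition $\nabla \ell_{\LS,\alpha}(w^*) = 0$ projected onto the $\High$-coordinates reads $\E[(\sigma(w^{*\top}Z) - p) Z_\High] = 0$, and the bulk of the work is to extract a quantitative bound on $\norm{w^*_\High}$ from this identity.

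First I would pin down an upper bound on the optimal loss. Let $\mu := \E[Z_\Low]$, which is nonzero by Assumption~\ref{lsmixassumpt}, and consider the explicit candidate $\tilde w$ with $\tilde w_\High := 0$ and $\tilde w_\Low := (t^*/\norm{\mu}^2)\mu$. Then $\tilde w^\top Z - t^*$ has mean zero and variance at most $(t^*)^2 \norm{\Sigma_{YX,\Low}} / \norm{\mu}^2$, and since $F_p'(t^*) = 0$ and $F_p'' \leq 1/4$, a second-order Taylor expansion yields $\ell_{\LS,\alpha}(\tilde w) - H(p) = O(\norm{\Sigma_{YX,\Low}})$. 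Optimality of $w^*$ immediately upgrades this to $\ell_{\LS,\alpha}(w^*) - H(p) = O(\norm{\Sigma_{YX,\Low}})$.

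Next I would use the integral form $\sigma(w^{*\top}Z) - \sigma(t^*) = G(Z)(w^{*\top}Z - t^*)$ with $G(Z) := \int_0^1 \sigma'(t^* + u(w^{*\top}Z - t^*))\,du \in (0, 1/4]$ to rewrite the $\High$-part of the first-order condition as the matrix equation
\begin{equation*}
\E[G(Z)\, Z_\High Z_\High^\top]\, w^*_\High \;=\; -\,\E\!\left[G(Z)(w^{*\top}_\Low Z_\Low - t^*)\, Z_\High\right].
\end{equation*}
The loss upper bound, combined with the linear tail behavior of $F_p$ and the compactness of $\supp(\pi)$, forces $w^*$ to remain bounded as $\norm{\Sigma_{YX,\Low}}$ shrinks, so $w^{*\top}Z$ lies in a compact interval on which $\sigma' \geq c_0 > 0$. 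Therefore $\E[G(Z)\, Z_\High Z_\High^\top] \succeq c_0 \E[Z_\High Z_\High^\top] \succeq c_0 \Sigma_{YX,\High} \succ c_0 \rho \Id$, and inverting this matrix gives $\norm{w^*_\High} \lesssim \norm{\E[G(Z)(w^{*\top}_\Low Z_\Low - t^*) Z_\High]}$.

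The hard part will be bounding the right-hand side tightly. A local quadratic lower bound on $F_p - F_p(t^*)$ (valid on the compact interval above) combined with the loss upper bound yields $\E[(w^{*\top}Z - t^*)^2] = O(\norm{\Sigma_{YX,\Low}})$, hence by the triangle inequality $\E[(w^{*\top}_\Low Z_\Low - t^*)^2] \leq O(\norm{\Sigma_{YX,\Low}}) + O(\norm{w^*_\High}^2)$. A naive Cauchy--Schwarz then only delivers a square-root rate; to reach the linear rate in $\norm{\Sigma_{YX,\Low}}$, I would decompose $Z_\High = \mu_\High + \tilde Z_\High$ with $\tilde Z_\High$ centered, bound the $\tilde Z_\High$ contribution through $\norm{\Sigma_{\Low\High}}^2 \leq \norm{\Sigma_{YX,\Low}} \norm{\Sigma_{YX,\High}}$ (from the PSD property of the full covariance), and invoke the $\Low$-part of the first-order condition to show $|w^{*\top}_\Low \mu - t^*| = O(\norm{w^*_\High})$; substituting and bootstrapping should then absorb the lost half-power and close the argument at the claimed rate.
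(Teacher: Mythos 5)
Your route is genuinely different from the paper's. The paper never touches first-order optimality conditions: it works entirely with loss values, showing via Lemma \ref{jensengap} that the loss is strongly convex in $Z = w^{\top}YX$, that a $w$ supported on $\Low$ with $\E[Z] = \sigma^{-1}(1-\alpha/2)$ gets within $\frac{\gamma_2}{2}w^{\top}\Sigma_{YX,\Low}w$ of the Jensen lower bound $\OPT$, and that any $w'$ with $\norm{w'_{\High}} > \epsilon$ sits at least $\frac{\gamma_1\rho\epsilon^2}{2}$ above $\OPT$; comparing the two numbers finishes the argument. Your first three paragraphs reach the same place by stationarity instead: the candidate $\tilde w$, the consequence $\E[(w^{*\top}Z - t^*)^2] = O(\norm{\Sigma_{YX,\Low}})$, and the inversion of $\E[G(Z)Z_{\High}Z_{\High}^{\top}] \succeq c_0\rho\,\Id$ are all sound, and comparing against the pointwise minimum $H(p)$ is arguably cleaner than the paper's comparison against the Jensen-gap optimum. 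Both arguments, as they stand, deliver $\norm{w^*_{\High}} = O(\norm{\Sigma_{YX,\Low}}^{1/2})$.

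The gap is your last paragraph, and it cannot be closed: the linear rate you are bootstrapping toward is false under Assumption \ref{lsmixassumpt} alone. The obstruction is exactly the cross-covariance term you identify --- what enters the bound on $\E[G(Z)\,e\,\tilde Z_{\High}]$ is $\norm{\Sigma_{\High\Low}}$ itself, not its square, and that is only $O(\norm{\Sigma_{YX,\Low}}^{1/2})$. Concretely, take $d = 2$, $\Low = \{1\}$, $\High = \{2\}$, $Y$ uniform on $\{\pm 1\}$, $\xi$ an independent Rademacher variable, and $X = Y\,(m + s\xi,\ \xi)$. All assumptions hold ($\E[YX_1] = m \neq 0$, $\Sigma_{YX,\High} = 1$, $\det \Sigma_X = m^2 > 0$, compact support), the loss depends on $w$ only through $(w_1 m,\ w_1 s + w_2)$, and the unique minimizer is $w_1^* = t^*/m$, $w_2^* = -t^* s/m$, so $\norm{w^*_{\High}} = \Theta(s) = \Theta(\norm{\Sigma_{YX,\Low}}^{1/2})$ while $\norm{\Sigma_{YX,\Low}} = s^2$. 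So you should stop after your third paragraph and assert the square-root rate: that already matches everything the paper's own proof establishes (its two displayed bounds combine to $\epsilon^2 \lesssim \norm{\Sigma_{YX,\Low}}$), and the $O(\norm{\Sigma_{YX,\Low}})$ in the theorem statement is an overstatement unless one additionally assumes the cross-covariance between $\Low$ and $\High$ decays at the rate of $\norm{\Sigma_{YX,\Low}}$ rather than its square root.
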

\textbf{Proof Sketch.} We show that $\ell_{\LS, \alpha}(w)$ is strongly convex in $w^{\top} YX$. We then use Jensen's inequality to get a lower bound on the loss in terms of this quantity, and show that this lower bound is achievable using only $w \in \Low$ as the variance of $YX_{\Low}$ decreases. Then using a lower bound on the Jensen gap (Lemma \ref{jensengap}) and Assumption \ref{lsmixassumpt}, we show that any solution that is sufficiently non-zero in the $\High$ dimensions remains bounded away from this lower bound.

\begin{restatable}{theorem}{mixresult}\label{mixresult}
    For any symmetric $\D_{\lambda}$ that is not a point mass on $0$ or $1$ and $\pi$ satisfying Assumption \ref{lsmixassumpt}, every minimizer $w^*$ of $\ell_{\MIX, \D_{\lambda}}(w)$ satisfies $\norm{w^*_{\High}} < O(\norm{\Sigma_{YX, \Low}})$.
\end{restatable}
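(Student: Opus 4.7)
The plan is to mirror the proof sketch for Theorem \ref{lsresult} as closely as possible. That argument writes the label-smoothing loss as $\E[F_\alpha(w^\top YX)]$ for a strongly convex scalar function $F_\alpha$, applies the Jensen-gap bound (Lemma \ref{jensengap}) to obtain $\ell_{\LS,\alpha}(w) \ge \min F_\alpha + \tfrac{\mu}{2}\, w^\top \Sigma_{YX} w$, and combines this lower bound with $\Sigma_{YX,\High} \succ \rho\, \Id$ together with a matching upper bound achieved by a particular $w^\dagger \in \Low$. My goal is to derive a structurally identical lower bound for $\ell_{\MIX,\D_\lambda}$ and then recycle the upper-bound step.

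The first step is to view the Mixup integrand as a convex function of the scalar pair $S_1 := Y_1 w^\top Z_\lambda$, $S_2 := Y_2 w^\top Z_\lambda$, since $h$ expands directly as $\lambda \log \sigma(S_1) + (1-\lambda) \log \sigma(S_2)$. On the bounded range of $S_i$ dictated by compactness of $K$ and boundedness of $w$ (which we may assume because $\ell_{\MIX,\D_\lambda}$ is coercive under the hypothesis on $\D_\lambda$), $-h$ is jointly strongly convex with Hessian $\succeq \mathrm{diag}(\lambda\mu_0,(1-\lambda)\mu_0)$ for some $\mu_0 > 0$. Applying Jensen's inequality with strong-convexity gap conditional on $\lambda$ yields
\[
\ell_{\MIX,\D_\lambda}(w) \;\ge\; \E_\lambda\!\bigl[-h\bigl(\E[S_1\mid\lambda],\E[S_2\mid\lambda];\lambda\bigr)\bigr] + \tfrac{\mu_0}{2}\,\E_\lambda\!\bigl[\lambda\Var(S_1\mid\lambda)+(1-\lambda)\Var(S_2\mid\lambda)\bigr].
\]

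A direct computation using $\E[X]=0$ and independence of the two draws gives $\E[S_1\mid\lambda]=\lambda m$ and $\E[S_2\mid\lambda]=(1-\lambda)m$ with $m := w^\top \E[YX]$, and after substituting $\Sigma_X = \Sigma_{YX}+\E[YX]\E[YX]^\top$, the variance term reduces to $c(\D_\lambda)\,w^\top \Sigma_{YX} w + \E[\lambda(1-\lambda)]\,m^2$, where $c(\D_\lambda):=\E[\lambda^2+(1-\lambda)^2]>0$ precisely because $\D_\lambda$ is not concentrated on $\{0,1\}$. This delivers the analogue of Theorem \ref{lsresult}'s key inequality,
\[
\ell_{\MIX,\D_\lambda}(w) \;\ge\; L^*(m) + \tfrac{\mu_0\, c(\D_\lambda)}{2}\, w^\top \Sigma_{YX} w,
\]
with $L^*(m) := \E_\lambda[-\lambda\log\sigma(\lambda m)-(1-\lambda)\log\sigma((1-\lambda)m)]$.

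Finally, for the matching upper bound, evaluate $\ell_{\MIX,\D_\lambda}$ at $w^\dagger\in\Low$ proportional to $\E[YX_\Low]$ (nonzero by Assumption \ref{lsmixassumpt}) and scaled so that $w^{\dagger\top}\E[YX]$ attains the minimizer of $L^*$. For such $w^\dagger$, $\Var(S_i\mid\lambda)=O(\norm{\Sigma_{YX,\Low}})$ uniformly in $\lambda$, so smoothness of $-h$ gives $\ell_{\MIX,\D_\lambda}(w^\dagger)\le L^*_{\min}+O(\norm{\Sigma_{YX,\Low}})$. Comparing with the lower bound on any minimizer $w^*$ and invoking $\Sigma_{YX,\High}\succ\rho\Id$ yields $\rho\norm{w^*_\High}^2 \le O(\norm{\Sigma_{YX,\Low}})$, as claimed. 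The main obstacle is the middle variance calculation: cleanly packaging the Jensen gap arising from the two iid samples plus the random mixing weight into an expression proportional to $w^\top\Sigma_{YX}w$, which requires exploiting symmetry of $\D_\lambda$ (so that the conditional means and cross terms align properly) and its non-degeneracy on $\{0,1\}$ (so that $c(\D_\lambda)$ is bounded away from zero). Once this is established, the translation to $\rho\norm{w^*_\High}^2$ via Assumption \ref{lsmixassumpt} follows exactly as in Theorem \ref{lsresult}.
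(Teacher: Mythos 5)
Your overall template (strong convexity of the integrand in the scalar projections, Jensen-gap lower bound via Lemma \ref{jensengap}, matching upper bound at a candidate $w^\dagger \in \Low$, then Assumption \ref{lsmixassumpt} to control $\norm{w^*_{\High}}$) is the right one and is the one the paper uses. But there is a genuine gap in your middle and final steps: you condition only on $\lambda$ before applying Jensen, and this makes your lower bound unachievable, so the sandwich does not close. Concretely, $S_1 = Y_1 w^\top Z_\lambda = \lambda Y_1 w^\top X_1 + (1-\lambda) Y_1 w^\top X_2$, and the cross term $Y_1 w^\top X_2$ (the label of one draw multiplying the independent other draw) has variance $w^\top \Sigma_X w = w^\top \Sigma_{YX} w + m^2$ with $m = w^\top \E[YX]$. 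Hence for your candidate $w^\dagger \in \Low$ one gets $\Var(S_1 \mid \lambda) = (1-\lambda)^2 m^2 + O(\norm{\Sigma_{YX,\Low}})$, not $O(\norm{\Sigma_{YX,\Low}})$ as you assert: since $w^\dagger$ is scaled so that $m \neq 0$, this variance is bounded away from zero. Your own variance identity already shows this — the $\E[\lambda(1-\lambda)]\,m^2$ term survives when $\Sigma_{YX,\Low} \to 0$. The upshot is that $\ell_{\MIX,\D_\lambda}(w^\dagger) \le L^*_{\min} + O(\norm{\Sigma_{YX,\Low}})$ is false; the best you can show is $\ell_{\MIX,\D_\lambda}(w^\dagger) \le L^*(m^\dagger) + \Theta(m^{\dagger 2}) + O(\norm{\Sigma_{YX,\Low}})$, which leaves an $\Omega(1)$ gap between your upper and lower bounds and yields only a vacuous bound on $\norm{w^*_{\High}}$.

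The paper avoids this by conditioning on $(\lambda, Y_1, Y_2)$ rather than on $\lambda$ alone: conditional on the labels, $\Var(w^\top Z_\lambda \mid \lambda, Y_1, Y_2)$ genuinely vanishes for $w \in \Low$ as the low-variance block concentrates, so the conditional Jensen lower bound is tight. The price of the finer conditioning is that the resulting bias term $\E_{\lambda, Y_1, Y_2}\bigl[\ell_{\MIX,\lambda}(\E[R \mid \lambda, Y_1, Y_2])\bigr]$ is no longer a function of a single scalar $m$ — it depends on $w^\top\bigl(\lambda \E[X \mid Y_1] + (1-\lambda)\E[X \mid Y_2]\bigr)$ across all label pairs and all $\lambda$ simultaneously — so one cannot exhibit an explicit minimizer as in the label-smoothing proof. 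This is exactly the step the paper flags as the main difficulty, and it is resolved there by an existence argument: the gradient of the conditional lower bound, restricted to $w \in \Low$, has all entries positive as $w_{\Low} \to +\infty$ and all entries negative as $w_{\Low} \to -\infty$ (using $\E[X]=0$, $\E[YX_{\Low}] \neq 0$, and symmetry of $\D_\lambda$ to organize the $Y_1 \neq Y_2$ terms), so by continuity a stationary point with $w_{\High} = 0$ exists. Your proposal is missing this ingredient entirely; without it (or some substitute), the upper-bound half of your argument cannot be repaired.
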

\textbf{Proof Sketch.} Mixup differs from label smoothing in that we need to condition on $\lambda$ and then show strong convexity of the conditional loss in terms of $w^{\top} Z_{\lambda}$. We can then get a lower bound similar to the label smoothing case, but it is no longer immediate that there exists a stationary point $w \in \Low$ that minimizes this lower bound. We prove the existence of such a stationary point by considering limiting behavior of the gradient of the loss as $w_{\Low}$ tends to $\infty$ or $-\infty$ in each component, after which the rest of the proof follows the label smoothing proof. 

\subsection{General Multi-Class Classification}\label{sec:multiclass}
Our results so far have demonstrated a separation between standard training (ERM + weight decay) and label augmentation (label smoothing and Mixup) in the linear binary classification setting, without explicitly having to assume linear separability. The benefit of the linear binary case is that in this case the learning problems are convex in the weight vector $w$, so we can directly discuss properties of the optimal solutions instead of worrying about optimization dynamics.

Of course, this is no longer true when we pass to nonlinear models, and standard model choices (e.g. neural networks) make it so that it is no longer easy to prove that a model has ``learned'' either $X_{\Low}$ or $X_{\High}$ without explicitly analyzing some choice of optimization algorithm (i.e. gradient descent). However, our observations do translate to the \textit{outputs} of any model. 

It is obvious that the model output variance should go to zero for any model that achieves the global optimum of the label augmentation losses we have discussed; indeed, this just corresponds to predicting the correct labels ($\alpha, 1 - \alpha$ with label smoothing and $\lambda, 1 - \lambda$ for every $\lambda$ for Mixup) with probability 1. On the other hand, it is less clear that we can make a quantitative statement regarding how bad the loss could be given a certain amount of variance in the model output.

By lifting the techniques of the previous subsection, we can prove such quantitative results for both label smoothing and Mixup in the general multi-class setting. The general data distribution we consider for these results is as follows.
\begin{restatable}{definition}{muldatadist}[Multi-Class Data Distribution]\label{muldatadist}
    We consider $\pi$ to be a distribution supported on $B \times [k]$ where $B$ is a compact subset of $\R^d$ and $k > 2$. We assume only that $\pi$ satisfies $\pi_Y(y) > 0$ for every $y$ and $\Sigma_X$ is positive definite. 
\end{restatable}

We make virtually no assumptions on $\pi$ for these results because we will only be proving general lower bounds, which are weaker than the claims regarding the optimal solutions of the previous subsection. For $\pi$ as in Definition \ref{muldatadist}, the label smoothing and Mixup results are as follows.
\begin{restatable}{proposition}{lsgeneral}\label{lsgeneral}
     For $\alpha > 0$ and any $g: \R^d \to \Delta^{k - 1}$, letting $\OPT_{\LS, \alpha}$ denote the minimum of $\ell_{\LS, \alpha}$, we have for a universal constant $C > 0$:
     \begin{align*}
         \ell_{\LS, \alpha}(g) \ge \OPT_{\LS, \alpha} + C\sum_{y = 1}^k \pi_Y(y) \Var\left(\E[g(X) \mid Y = y]\right). \numberthis \label{eq:lsgeneral}
     \end{align*}
\end{restatable}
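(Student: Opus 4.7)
The plan is to prove Proposition~\ref{lsgeneral} via strong convexity of the per-sample label-smoothed cross entropy combined with a classwise strong-Jensen inequality. Rewriting
$$\ell_{\LS,\alpha}(g) \;=\; \sum_{y=1}^k \pi_Y(y)\,\E[F_y(g(X)) \mid Y = y]$$
with $F_y(p) := -(1-\alpha)\log p^y - (\alpha/k)\sum_{i=1}^k\log p^i$, the first step is to show that each $F_y$ is $(\alpha/k)$-strongly convex on $\Delta^{k-1}$. A direct Hessian computation gives $\nabla^2 F_y(p) = \mathrm{diag}(q_y^i/(p^i)^2)$, where $q_y^i := (1-\alpha)\Ind[i=y]+\alpha/k$; since $q_y^i \ge \alpha/k$ and $p^i \in (0,1]$ on the simplex, this Hessian is $\succeq (\alpha/k)\,I$.

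Next I would invoke the strong-convexity form of Jensen's inequality: for any $\mu$-strongly convex $\phi$ and random vector $Z$ taking values in the domain of $\phi$,
$$\E[\phi(Z)] \;\ge\; \phi(\E Z) + \frac{\mu}{2}\,\Var(Z),$$
obtained by expanding the strong-convexity inequality around $w = \E Z$ and taking expectations so that the linear term vanishes. Applying this class by class with $\phi = F_y$, $Z$ the conditional law of $g(X)$ given $Y=y$, and $\mu_y := \E[g(X) \mid Y = y]$, then averaging against $\pi_Y$, yields
$$\ell_{\LS,\alpha}(g) \;\ge\; \sum_{y=1}^k \pi_Y(y)\,F_y(\mu_y) \;+\; \frac{\alpha}{2k}\sum_{y=1}^k \pi_Y(y)\,\Var(g(X) \mid Y = y),$$
which exhibits the claimed variance term with universal constant $C = \alpha/(2k)$, reading the right-hand side of the proposition as the trace of the conditional covariance of $g(X)$ given $Y=y$.

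The main obstacle will be the remaining reconciliation: showing $\sum_y \pi_Y(y)\,F_y(\mu_y) \ge \OPT_{\LS,\alpha}$. Since each $\mu_y$ lies in $\Delta^{k-1}$ and $F_y$ is minimized at the smoothed one-hot $q_y$, one immediately has $\sum_y \pi_Y(y) F_y(\mu_y) \ge \min_p F_y(p) = -\sum_i q_y^i\log q_y^i$, which already equals $\OPT_{\LS,\alpha}$ whenever $Y$ is deterministic given $X$. Closing the remaining gap in general should go through the KL identity $\ell_{\LS,\alpha}(g) - \OPT_{\LS,\alpha} = \E_X[D_{\mathrm{KL}}(\bar q_X \,\|\, g(X))]$, with $\bar q_x := (1-\alpha)\pi_{Y\mid X=x} + (\alpha/k)\mathbf{1}$ the Bayes-optimal label-smoothed predictor, and a conditional convexity argument that distributes the KL budget across the per-class variance terms. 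An equivalent route that avoids the residual comparison entirely is to apply strong convexity pointwise in $X$, giving $\ell_{\LS,\alpha}(g) \ge \OPT_{\LS,\alpha} + (\alpha/2k)\E\|g(X) - g^*(X)\|^2$, and then push $\E\|g(X)-g^*(X)\|^2$ through conditional Jensen against $Y$ to obtain a per-class lower bound of the form $\sum_y \pi_Y(y)\|\mu_y(g)-\mu_y(g^*)\|^2$ on the right.
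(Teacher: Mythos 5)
Your decomposition, the Hessian computation showing $\nabla^2 F_y \succeq (\alpha/k)\,\mathrm{I}$ on the simplex, and the classwise strong-Jensen step are exactly the paper's argument (Lemma \ref{jensengap} applied to the conditional losses with the regular conditional distributions $\pi_{X\mid Y=y}$); your version is in fact more explicit, since you extract the strong-convexity constant $\alpha/k$ directly from $q_y^i \ge \alpha/k$ and $p^i \le 1$ rather than restricting $g(X)$ to $[\gamma, 1-\gamma]$ as the paper does. Up to the inequality $\ell_{\LS,\alpha}(g) \ge \sum_y \pi_Y(y) F_y(\mu_y) + \frac{\alpha}{2k}\sum_y\pi_Y(y)\Var(g(X)\mid Y=y)$ you have reproduced the intended proof (and you are right that the right-hand side of \eqref{eq:lsgeneral} must be read as this conditional variance; as literally written it is the variance of a constant vector).

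The step you flag as the ``main obstacle'' is, however, a genuine gap -- and one that cannot be closed in the stated generality, so neither of your proposed completions will succeed. The paper's own proof silently identifies $\sum_y\pi_Y(y)F_y(\mu_y)$ with a quantity at least $\OPT_{\LS,\alpha}$, but this fails as soon as $Y$ is not $\pi$-a.s.\ deterministic given $X$. Concretely, take $k=2$ and two atoms $x_1,x_2$ each with $\pi_X$-mass $1/2$, with $\pi_{Y\mid X=x_1}=(0.9,0.1)$ and $\pi_{Y\mid X=x_2}=(0.1,0.9)$. The pointwise minimizer $g^*(x)=(1-\alpha)\pi_{Y\mid X=x}+(\alpha/2)\mathbf{1}$ attains $\ell_{\LS,\alpha}(g^*)=\OPT_{\LS,\alpha}$, yet conditional on either label $g^*(X)$ is a nondegenerate two-point random vector, so $\sum_y\pi_Y(y)\Var(g^*(X)\mid Y=y)>0$ and the claimed bound is violated at $g=g^*$. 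Your KL-identity route therefore cannot ``distribute the budget'' as hoped, and your alternative route proves a different (true) statement: it lower-bounds the loss by $\sum_y\pi_Y(y)\norm{\mu_y(g)-\mu_y(g^*)}^2$, a quantity that vanishes at $g^*$, not by the conditional variance of $g$. The honest fix is either to assume $\pi_{Y\mid X}$ is a.s.\ a point mass -- in which case $\OPT_{\LS,\alpha}=\sum_y\pi_Y(y)\min_{p}F_y(p)$ and your chain closes immediately, exactly as you observe -- or to state the bound with $\sum_y\pi_Y(y)\min_{p}F_y(p)$ in place of $\OPT_{\LS,\alpha}$.
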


\begin{restatable}{proposition}{mixgeneral}\label{mixgeneral}
    For any $\D_{\lambda}$ that is not a point mass on $0$ or $1$ and any $g: \R^d \to \Delta^{k - 1}$, letting $\OPT_{\MIX, \D_{\lambda}}$ denote the minimum of $\ell_{\MIX, \D_{\lambda}}$, we have for a universal constant $C > 0$ that:
     \begin{align*}
         \ell_{\MIX, \D_{\lambda}}(g) \ge \OPT_{\MIX, \D_{\lambda}} + C\sum_{y_1 = 1}^k \sum_{y_2 = 1}^k \pi_Y(y_1) \pi_Y(y_2) \E_{\lambda} \left[\Var\left(\E[g(Z_{\lambda}) \mid y_1, y_2, \lambda]\right)\right]. \numberthis \label{eq:mixgeneral}
     \end{align*}
\end{restatable}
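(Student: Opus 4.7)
The plan is to mirror the proof of Proposition~\ref{lsgeneral} for label smoothing, but with an additional layer of conditioning on the full Mixup-defining triple $(Y_1, Y_2, \lambda)$ (as foreshadowed in the sketch for Theorem~\ref{mixresult}). First, I would fix $(y_1, y_2, \lambda)$ with $\lambda \in (0, 1)$ and write the conditional Mixup integrand as $F_{y_1, y_2, \lambda}(g(Z_\lambda))$, where $F_{y_1, y_2, \lambda}(p) := -\lambda \log p^{y_1} - (1-\lambda) \log p^{y_2}$. Since $-\log$ is strongly convex on any bounded subinterval of $(0, 1]$ and $\D_\lambda$ is not a point mass at the endpoints, $F_{y_1, y_2, \lambda}$ should admit a uniform strong-convexity modulus $c > 0$ on the relevant interior of $\Delta^{k-1}$ after a suitable truncation/restriction.

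Next, I would apply Lemma~\ref{jensengap} (the strong-convexity-based Jensen gap bound used in Theorem~\ref{lsresult}) to the random vector $g(Z_\lambda)$ conditioned on $(Y_1, Y_2, \lambda) = (y_1, y_2, \lambda)$. Writing $\mu_{y_1, y_2, \lambda} := \E[g(Z_\lambda) \mid y_1, y_2, \lambda]$, this would yield
\begin{align*}
\E[F_{y_1, y_2, \lambda}(g(Z_\lambda)) \mid y_1, y_2, \lambda] \;\ge\; F_{y_1, y_2, \lambda}(\mu_{y_1, y_2, \lambda}) + c \cdot \Var(g(Z_\lambda) \mid y_1, y_2, \lambda).
\end{align*}
Integrating this inequality against $\pi_Y(y_1)\pi_Y(y_2)\D_\lambda$ recovers the target variance term of the claim. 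For the leftover piece $\sum_{y_1, y_2} \pi_Y(y_1)\pi_Y(y_2) \E_\lambda[F_{y_1, y_2, \lambda}(\mu_{y_1, y_2, \lambda})]$, I would argue it is lower bounded by $\OPT_{\MIX, \D_\lambda}$ by constructing an auxiliary ``label-averaged'' predictor whose Mixup loss equals it and invoking the definition of $\OPT_{\MIX, \D_\lambda}$ as an infimum over realizable predictors --- or, equivalently, by pointwise convexity of $F$ and a data-processing/tower argument on the inner conditional expectation.

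The step I expect to be the main obstacle is twofold. First, securing a genuinely universal strong-convexity modulus $c$ requires controlling the unboundedness of $-\log$ near $0$; the natural workaround is to observe that outputs approaching the boundary of $\Delta^{k-1}$ make $\ell_{\MIX, \D_\lambda}(g)$ diverge (so without loss of generality $g$'s outputs lie in a fixed compact subset of the interior), which then lets one appeal to a uniform lower bound on $(-\log)''$. Second, connecting the surviving $F(\mu)$ term back to $\OPT_{\MIX, \D_\lambda}$ is delicate because label-aware conditional means in principle enjoy a ``cheating'' advantage over any $z$-measurable predictor; the argument must exploit convexity of $F$ together with Jensen's inequality in the other direction to ensure that replacing $g$ by its conditional means does not produce a loss below the true infimum over $g: \R^d \to \Delta^{k-1}$.
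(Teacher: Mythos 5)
Your proposal follows essentially the same route as the paper's proof: decompose the Mixup loss into conditional losses indexed by $(y_1, y_2, \lambda)$, restrict attention to outputs in a compact subset of the interior of $\Delta^{k-1}$ (justified because the loss diverges otherwise when $\D_{\lambda}$ is not a point mass on $0$ or $1$), establish strong convexity of each conditional loss in $g(Z_{\lambda})$, and apply Lemma \ref{jensengap} term by term. The one step you flag as the main obstacle --- verifying that the residual term $\sum_{y_1, y_2} \pi_Y(y_1)\pi_Y(y_2)\E_{\lambda}\left[F_{y_1,y_2,\lambda}(\mu_{y_1,y_2,\lambda})\right]$ actually dominates $\OPT_{\MIX, \D_{\lambda}}$ even though the conditional means are label-aware --- is a genuine subtlety, but the paper's own proof does not address it either (it simply invokes Lemma \ref{jensengap} and concludes), so your treatment is no less complete than the reference argument.
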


The proofs for both Propositions \ref{lsgeneral} and \ref{mixgeneral} follow the same structure; we show that after appropriate conditioning, both losses can be broken up into a sum of strongly convex conditional losses. The lower bounds in both results show that in order to make progress with respect to the label smoothing or Mixup losses, a training algorithm needs to push model outputs to be low variance.
\begin{remark}
    We do not prove an analogous result to Theorem \ref{wdresult} in the general setting because Propositions \ref{lsgeneral} and \ref{mixgeneral} operate directly on the model outputs $g(X)$, and it is not clear how to translate a weight norm constraint to this setting without explicitly parameterizing $g$. Intuitively, however, with a weight norm constraint on $g$ it may no longer be optimal to have zero variance predictions, since such predictions may require very large weights depending on the data distribution.
\end{remark}

\textbf{Comparisons to existing results.} We are not aware of any existing results on feature learning for label smoothing; prior theoretical work largely focuses on the relationship between label smoothing and learning under label noise, which is orthogonal to the perspective we take in this paper. Although feature learning results exist for Mixup \citep{midpointmix, zou2023benefits}, the pre-existing results are constrained to the case of mixing using point mass distributions and only prove a separation between Mixup and unregularized ERM, whereas our results work for arbitrary symmetric mixing distributions (that don't coincide with ERM) and also separate Mixup from ERM with weight decay. This greater generality comes at the cost of considering only linear models for our feature learning results; both \citet{midpointmix} and \citet{zou2023benefits} consider the training dynamics of 2-layer neural networks on non-separable data.  

Additionally, our results can be viewed as generalizing the observations made by \citet{midpointmix}, which were that Mixup can learn multiple features in the data when doing so decreases the variance of the learned predictor. In our case, we directly show that Mixup will have much larger correlation with the low variance feature in the data as opposed to the high variance feature. Our results also do not contradict the observations of \citet{zou2023benefits}, which were that Mixup can learn both a ``common'' feature and a ``rare'' feature in the data; in their setup the common/rare features are fixed (zero variance) per class and concatenated with high variance noise. We provide a more detailed review of the settings of \citet{midpointmix} and \citet{zou2023benefits} in Appendix \ref{sec:priorwork}.

\section{Experiments}\label{sec:experiments}
We now address the practical ramifications of our theoretical results. In Section \ref{sec:hidden}, we show that the intermediate representations learned by deep learning models trained with label smoothing and Mixup do indeed exhibit significantly lower variance when compared to those learned using just weight decay, and that these lower variance representations correlate with better performance. We also show, however, that this minimum variance feature learning can be a detriment by analyzing spurious correlations in the training data in Section \ref{sec:spurious}. We also include synthetic experiments directly verifying our theoretical setting in Appendix \ref{sec:directver}.

All experiments in this section were conducted on a single A5000 GPU using PyTorch \citep{pytorch} for model implementation. All reported results correspond to means over 5 training runs, and shaded regions in the figures correspond to 1 standard deviation bounds.

\subsection{Learned Low Variance Features in Image Classification}\label{sec:hidden}
\begin{figure*}[htp]
\centering
    \subfigure[CIFAR-10 Test Error]{\includegraphics[scale=0.18]{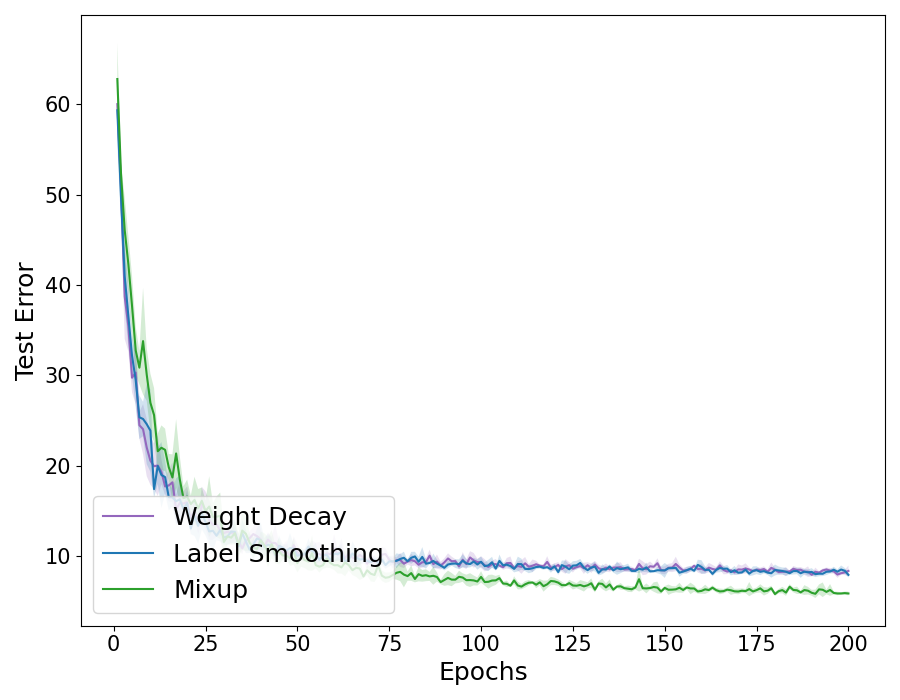}} \quad
    \subfigure[CIFAR-10 Activation Variance]{\includegraphics[scale=0.18]{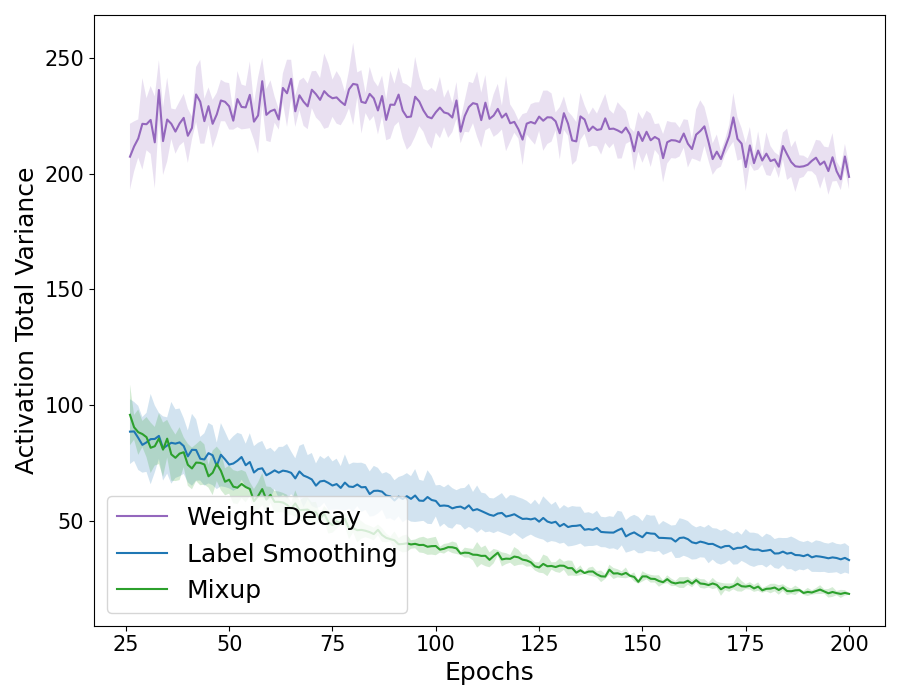}} \quad
    \subfigure[CIFAR-10 Output Variance]{\includegraphics[scale=0.18]{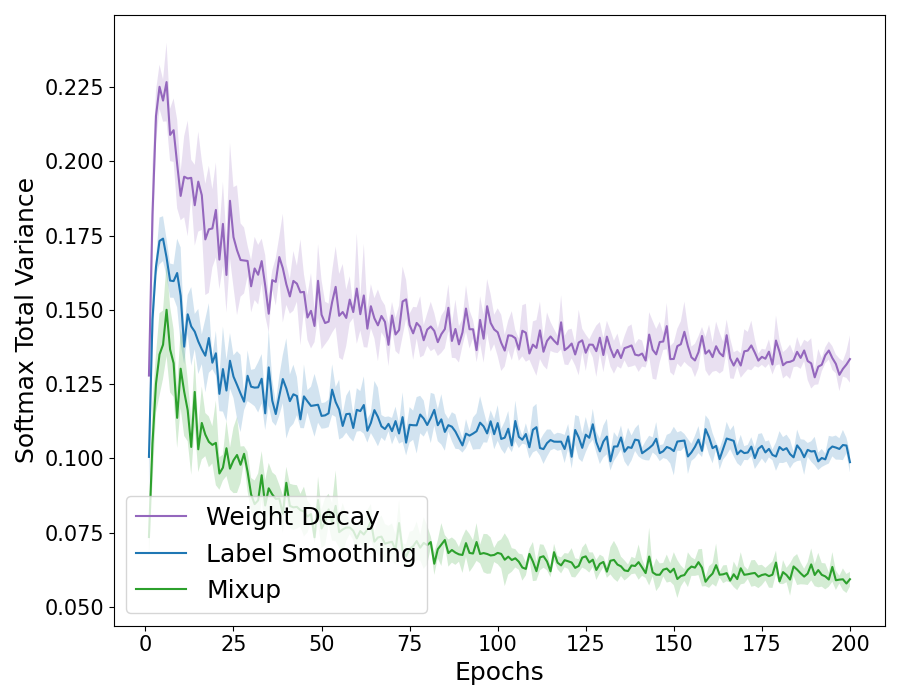}} \quad
    \subfigure[CIFAR-100 Test Error]{\includegraphics[scale=0.18]{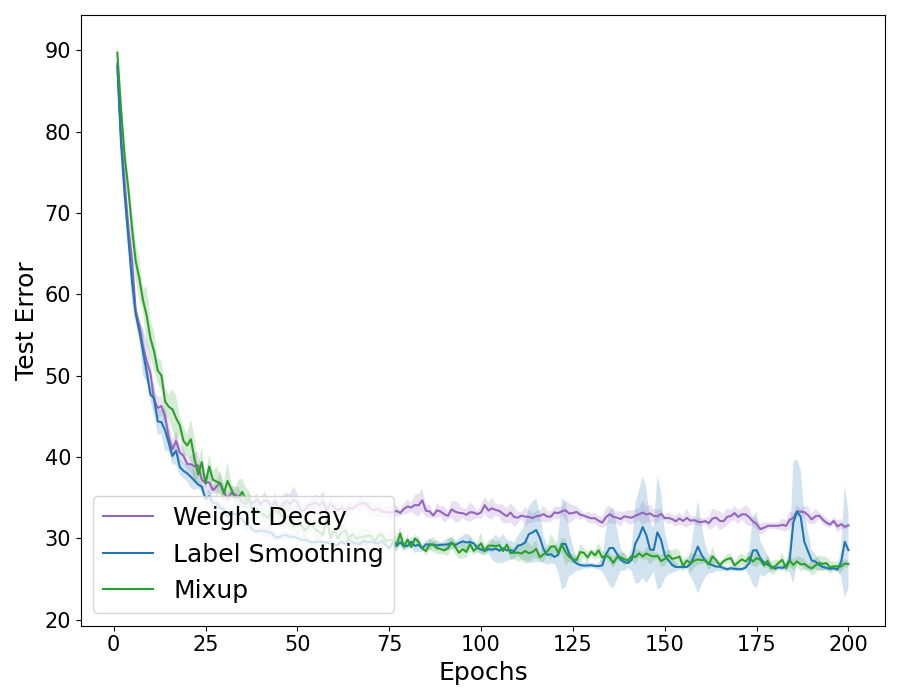}} \quad
    \subfigure[CIFAR-100 Activation Variance]{\includegraphics[scale=0.18]{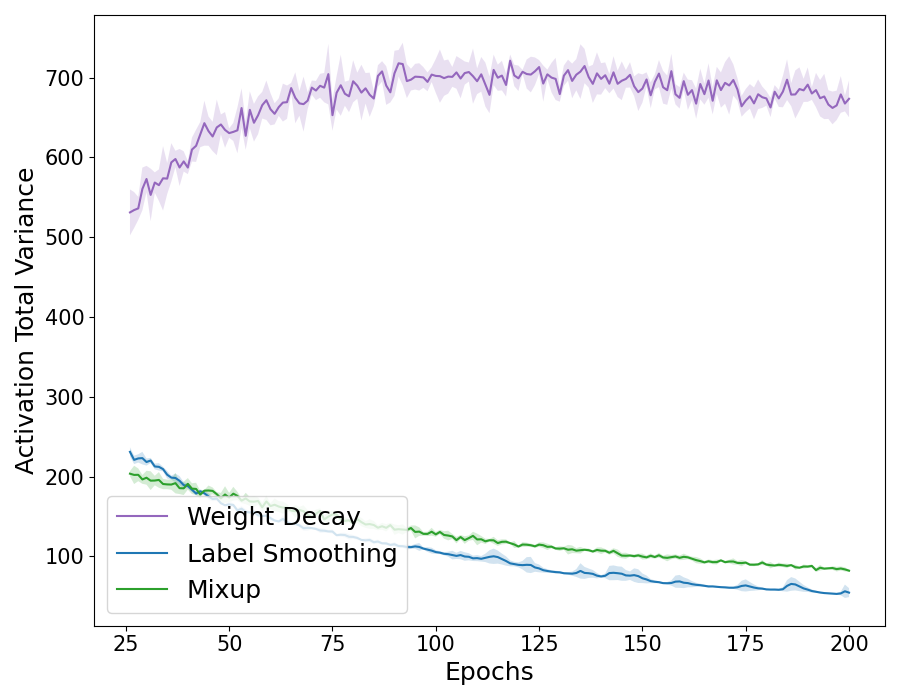}} \quad
    \subfigure[CIFAR-100 Output Variance]{\includegraphics[scale=0.18]{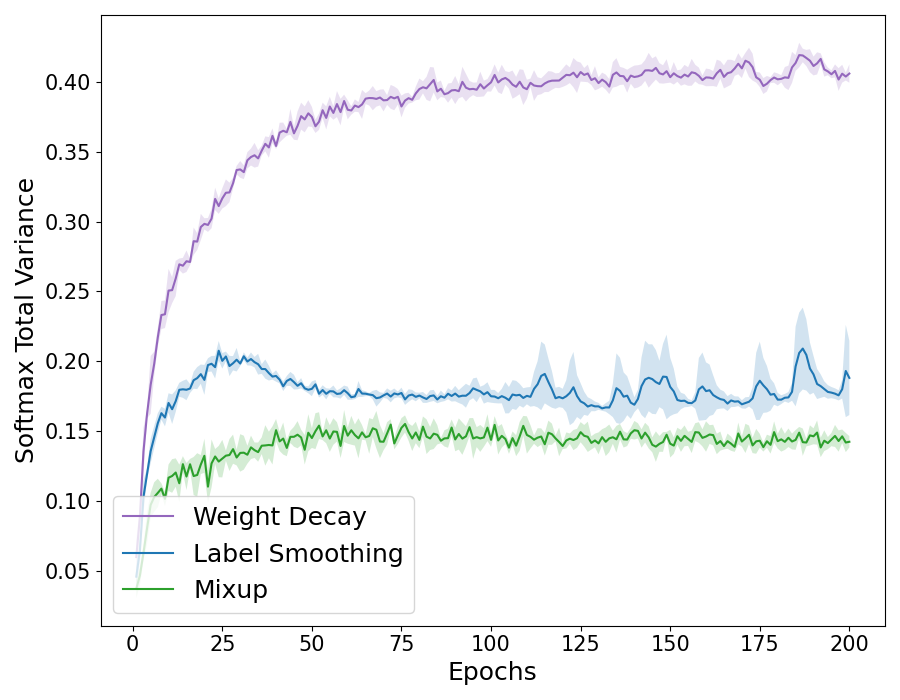}}
\caption{ResNet-18 final test errors, penultimate layer activation variances, and output probability variances on CIFAR-10 and CIFAR-100. Activation variance results are shown starting at epoch 25 as early epochs have larger scale oscillations in the computed variance.}
\label{fig:resnet18}
\end{figure*}

We first consider image classification on the standard benchmarks of CIFAR-10 and CIFAR-100 \citep{cifar} using ResNets \citep{resnet}; we show results for ResNet-18 in this section and relegate further experiments for deeper architectures to Appendix \ref{sec:deeper} (they follow the same trends). We compare the performance of training using just ERM + weight decay to that of ERM + weight decay combined with label smoothing or Mixup; final test error performance is shown in Figure \ref{fig:resnet18} (a) and (d).

Due to compute constraints, we focus on known well-performing settings \citep{zhang2018mixup, muller2020does} for weight decay, label smoothing, and Mixup. Namely, we take the weight decay parameter to be $5 \times 10^{-4}$, the label smoothing $\alpha$ parameter to be $0.1$, and the mixing distribution for Mixup to be $\Beta(1, 1)$ (uniform distribution). We train all models for 200 epochs using a batch size of 1024, a fixed learning rate of $10^{-3}$, and AdamW for optimization \citep{loshchilov2019decoupledweightdecayregularization}. We preprocess the training data to have zero mean and unit variance along each channel, and also include random crop and flip augmentations as is standard practice for achieving good performance. 

Figure \ref{fig:resnet18} shows that adding in label smoothing and Mixup to the baseline of ERM + weight decay leads to noticeable performance improvements, which corroborates existing results in the literature. The novel aspect of our results is shown in Figure \ref{fig:resnet18} (b), (c), (d), and (e), where we track the mean total variance of the penultimate layer activations and output probabilities of each model on the test data over the course of training. The variance values are computed by first computing the total variance of the activations/probabilities (i.e. the sum of the variance in each dimension) \textit{for each class}, and then averaging over all classes. Final test error and variance values are provided in Table \ref{tab:resnet18cifar10} and \ref{tab:resnet18cifar100} in Appendix \ref{sec:moreresnet18}.

Essentially, we measure the average spread of activations and probabilities across classes, which is similar in spirit to what was done by \citet{muller2020does}, although we directly look at variances across all classes whereas they look at projected clusters of a few specific classes. The most telling results are the model activation variances, since it is to an extent expected that per-class variance of model outputs should decrease with improvements in test error (this also corresponds to improved model calibration, which is a known consequence of training with label smoothing and Mixup). That being said, we further analyze model output variance in Appendix \ref{sec:outvariance} and show that label smoothing and Mixup decrease total output variance by a larger extent than what can be explained by changes in the target class prediction variance alone.

Overall, our results show that adding either label smoothing or Mixup to the baseline of just weight decay leads to \textit{significant} decreases in the activation variances, adding credence to the idea that both methods learn low variance features. We note, however, that our results only establish a correlation between this low variance property and better test performance -- it would require a significantly more in-depth empirical study to assess a causal relationship between this kind of feature learning and generalization, which we think would be a fruitful avenue for future work.
\subsection{Low Variance Spurious Correlations}\label{sec:spurious}
We now demonstrate that honing in on low variance features can also be \textit{harmful} to performance via binary classification and multi-class classification tasks in which the training data is modified to have spurious correlations with the target. The introduced spurious correlations are \textit{much lower magnitude} than the rest of the data, and in that sense they intuitively satisfy both Assumptions \ref{separability} and \ref{lsmixassumpt}.

\subsubsection{Binary Classification With Perturbed Training Data}\label{sec:binary}
\begin{figure*}[htp]
\centering
    \subfigure[Weight Decay]{\includegraphics[scale=0.18]{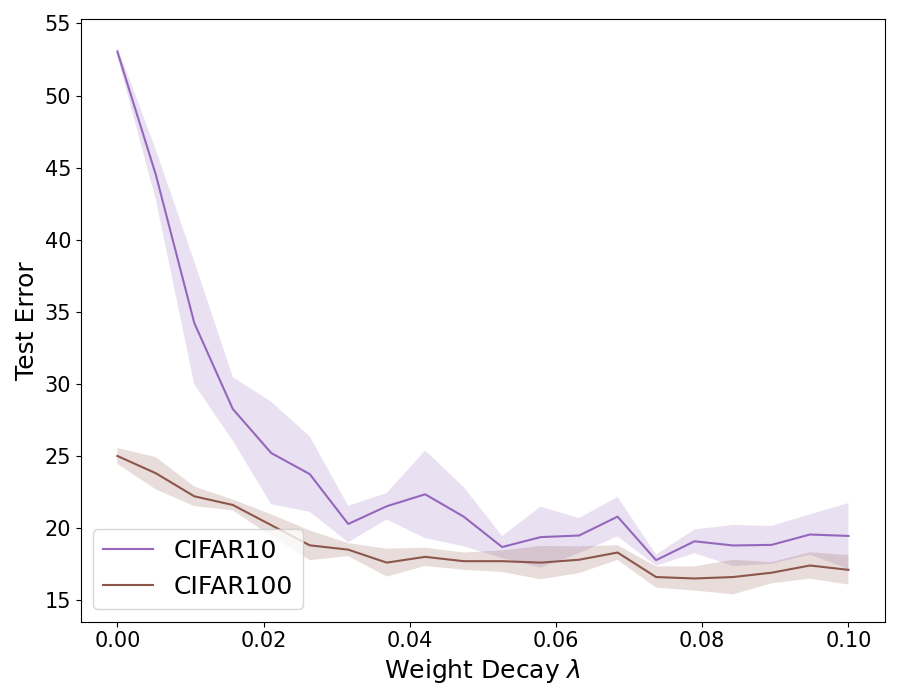}} \quad
    \subfigure[Label Smoothing]{\includegraphics[scale=0.18]{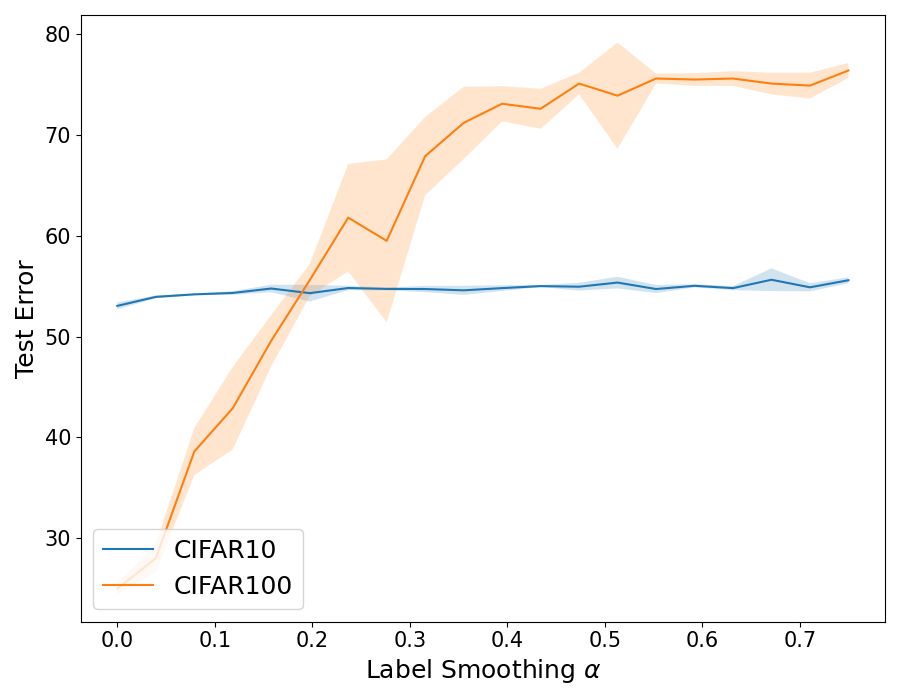}} \quad
    \subfigure[Mixup]{\includegraphics[scale=0.18]{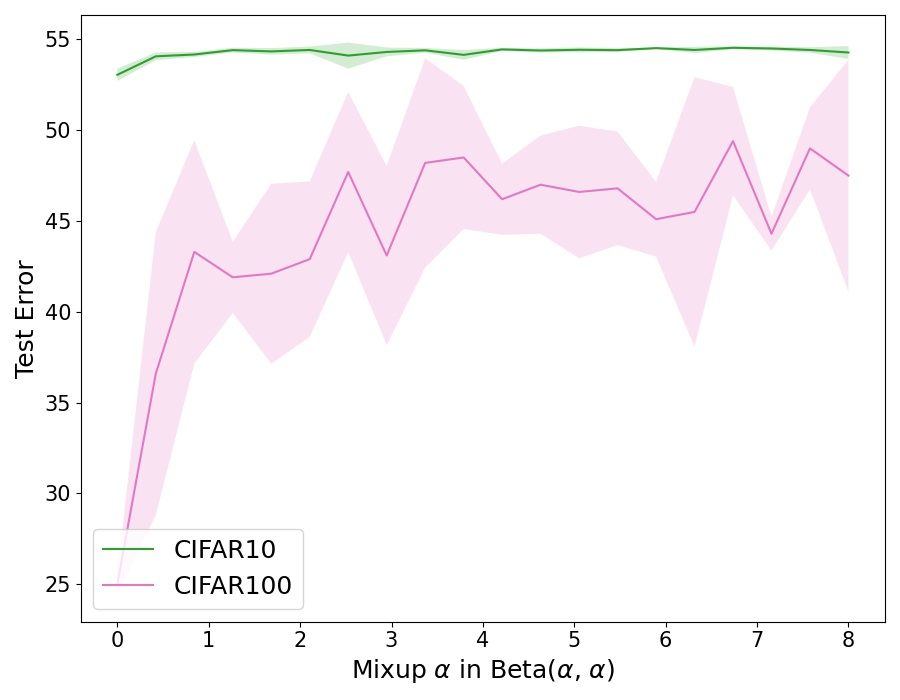}}
\caption{Logistic regression final test errors for various hyperparameter settings on the binary classification versions of CIFAR-10 and CIFAR-100 from Section \ref{sec:binary}.}
\label{fig:binimgplots}
\end{figure*}
For our binary classification task, we consider reductions of the CIFAR benchmarks to binary classification by fixing two classes from each dataset as the positive and negative classes, and replacing their original labels with the labels $+1$ and $-1$ respectively. Our experiments are not sensitive to the choice of binary reduction, and for the experiments in this section we fix the $-1$ class to be class $0$ from the original data and the $+1$ class to be class $1$ from the original data.

We preprocess the training data to have mean zero and variance one along every image channel, but do not use other augmentations or perform feature extraction using pretrained models. We then ``adversarially'' modify the training data such that the first value in the tensor representation of each training input is replaced by $\gamma y$ with $\gamma = 0.1$ ($\gamma$ here just needs to be small, we verified the results for $\gamma = 10^{-5}$ up to $\gamma = 0.1$). This ensures that the training data is linearly separable in the first dimension of the data, but learning this first dimension requires having larger weight norm due to the scaling by $\gamma$. We leave the \textbf{test data unchanged} - our goal is to determine whether models trained on the modified training data can learn more than just the single identifying dimension.

We then train logistic regression models on both the reduced CIFAR-10 and CIFAR-100 tasks across a range of settings for weight decay, label smoothing, and Mixup. We consider 20 uniformly spaced values in $[0, 0.1]$ for the weight decay $\lambda$ parameter and in $[0, 0.75]$ for the label smoothing $\alpha$ parameter, where the upper bound for the label smoothing parameter is obtained from the experiments of \citet{muller2020does}. For Mixup, we fix the mixing distribution to be the canonical choice of $\Beta(\alpha, \alpha)$ introduced by \citet{zhang2018mixup} and consider 20 uniformly spaced $\alpha$ values in $[0, 8]$ (with $\alpha = 0$ corresponding to ERM), which effectively covers the range of Mixup hyperparameter values used in practice. Other hyperparameters are the same as before, except we use a learning rate of $5 \times 10^{-3}$.

The results across hyperparameter values are shown in Figure \ref{fig:binimgplots}. Both the label smoothing and Mixup models have high test error for all settings while weight decay achieves a significantly lower test error for all $\lambda > 0$. ERM also has high test error; in this case we differ from the setting of implicit bias results due to training with Adam and likely not training for the period required for convergence to the max-margin solution. Furthermore, these results are insensitive to introducing a small amount of weight decay to the label smoothing and Mixup models (i.e. $5 \times 10^{-4}$ as in the previous subsection).

\begin{figure}[h]
\centering
    \subfigure[CIFAR-10]{\includegraphics[width=0.3\columnwidth]{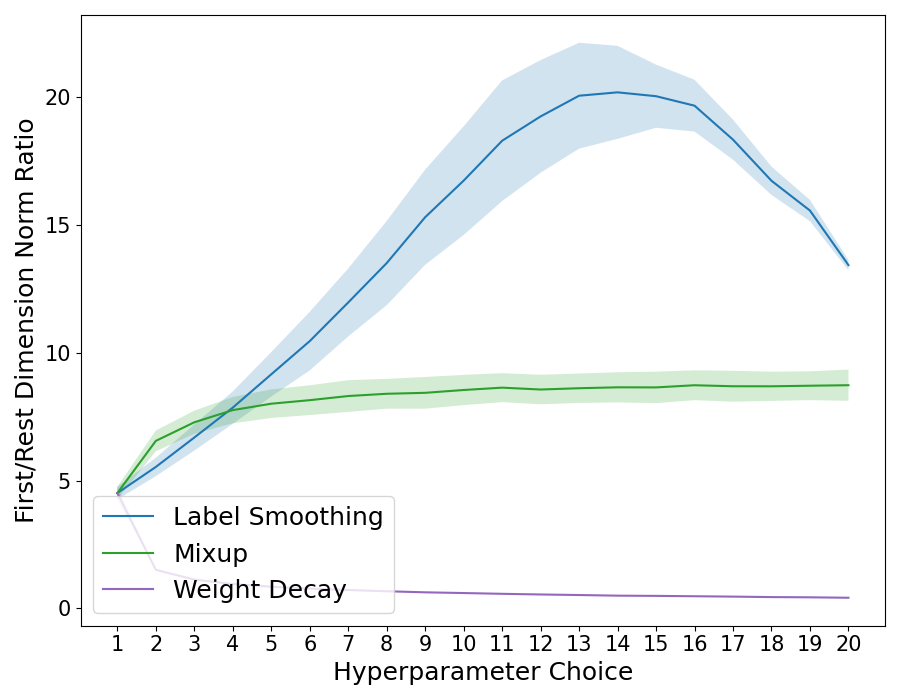}} 
    \subfigure[CIFAR-100]{\includegraphics[width=0.3\columnwidth]{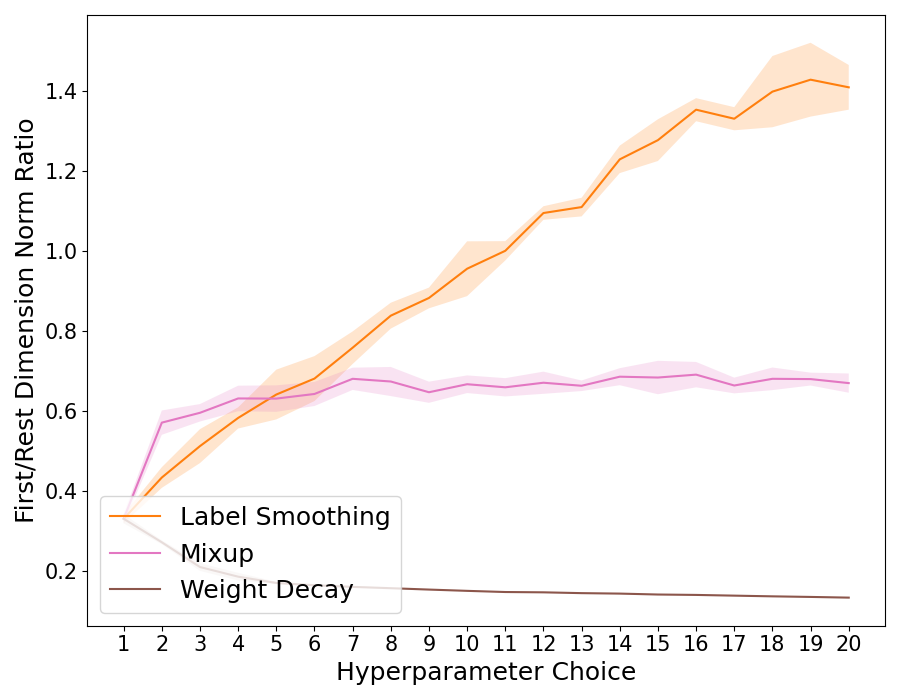}} 
\caption{Comparison of norm ratio between first dimension (synthetically modified in the training data) and remaining dimensions (left unchanged) for trained logistic regression weight vector for the 20 different hyperparameter settings of weight decay, label smoothing, and Mixup.}
\label{fig:binimgplots2}
\end{figure}

Our results correspond to label smoothing and Mixup learning to use only the spurious, identifying dimension of the training inputs, as we expect from our theory since this dimension has zero conditional variance. Indeed, we verify this fact empirically in Figure \ref{fig:binimgplots2}, where we plot the ratio $\norm{w_1} / \norm{w_{[d] \setminus \{1\}}}$ (i.e. the ratio between the norm of the trained model weight vector in the first dimension and the remaining dimensions) for each of the trained models. 

\subsection{Spurious Background Correlations}\label{sec:background}
\begin{figure}[h]
\centering
    \subfigure[Colored MNIST]{\includegraphics[width=0.3\columnwidth]{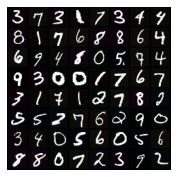}} 
    \subfigure[Model Performance]{\includegraphics[width=0.3\columnwidth]{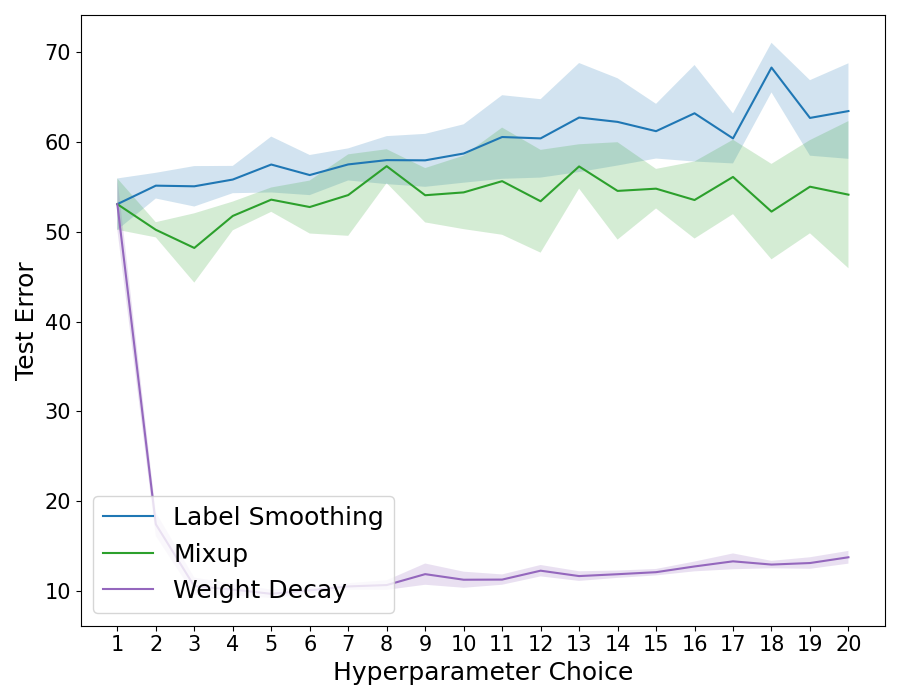}} 
\caption{Final model test errors over our hyperparameter sweep for the colored MNIST dataset of Section \ref{sec:background}, alongside a visualization of samples from the dataset.}
\label{fig:mulimgplot}
\end{figure}
For our multi-class analogue to Section \ref{sec:binary}, we consider a similar setup to the one used by \citet{arjovsky2020invariant} to motivate the influential invariant risk minimization framework: namely, we construct a colored version of the MNIST dataset in which the background pixels for each class are replaced with different colors corresponding to the class labels. Unlike \citet{arjovsky2020invariant}, however, we maintain all 10 classes since our theory from Section \ref{sec:multiclass} suggests that our observations should generalize to the multi-class setting. The colors used for each class background are permuted between the train and the test data, so that a model that learns to predict only using background pixels cannot achieve good test accuracy. 

The only constraints we place on the background colors are that they are distinct across classes and that their intensities (i.e. values in RGB space) are small (it suffices to consider values bounded by 16). The latter constraint is not necessary for the failure of label smoothing and Mixup, but is necessary for weight decay to succeed since it ensures that the higher variance feature (the actual digit in the foreground) is generally more separable (due to larger pixel intensities). Note that although there is no variance in the background color conditional on a class, there is significant variance in the actual pixels per class as the locations of the background pixels change across data points.

For our model setup, we also follow \citet{arjovsky2020invariant} and consider a simple 2-layer feedforward neural network with ReLU activations and a hidden layer size of $2048$, as this is sufficient to achieve good performance on MNIST and is a tiny enough model that we can efficiently do the same hyperparameter sweep from Section \ref{sec:binary}. Training details remain the same as in Section \ref{sec:binary}, except we only train for 20 epochs as this is sufficient for the training loss to roughly converge and greatly expedites the hyperparameter sweep. 

Test error results across the different hyperparameter settings for weight decay, label smoothing, and Mixup are shown in Figure \ref{fig:mulimgplot}. We observe the same phenomena as before, even in this non-trivial spurious correlation setting: both the label smoothing and Mixup models have high test error for all settings while weight decay achieves a significantly lower test error for all $\lambda > 0$.

\section{Conclusion}\label{sec:conclusion}
In this work, we have shown that label augmentation strategies such as label smoothing and Mixup exhibit a variance minimization effect (both in theory and in practice) that leads to lower variance intermediate representations and model outputs, which then correlate with better test performance on image classification benchmarks. A natural follow-up direction to our results is to investigate whether regularizers for encouraging lower variance features can be implemented directly to improve model performance, which would shed light on whether there is a causal relationship between this phenomenon and improved performance.



\section*{Ethics Statement} 
Although label smoothing and Mixup are used to train and fine-tune large-scale models, our results concerning them in this work have mostly been theoretical and explanatory. As a result, we do not anticipate any direct misuse of our results or any broader harmful impacts.

\section*{Reproducibility Statement}
All proofs of the results in this paper can be found in Appendix \ref{sec:proofs}. The supplementary material contains the code necessary to generate all figures, with instructions on how to run each experiment.

\section*{Acknowledgments}
Rong Ge and Muthu Chidambaram were supported by NSF Award DMS-2031849 and CCF-1845171 (CAREER) during the completion of this work.
Muthu would like to thank Annie Tang for thoughtful feedback during the early stages of this project.

\bibliography{iclr2025_conference}
\bibliographystyle{iclr2025_conference}

\newpage
\appendix
\section{Omitted Proofs} \label{sec:proofs}

\subsection{Helper Lemma}
The following lemma lower bounds the Jensen gap in terms of the variance of the random variable being considered. We will make repeated use of it in the proofs of the label smoothing and Mixup results.

\begin{restatable}{lemma}{jensengap}\label{jensengap}
    Let $\phi: \R^d \to \R$ be a twice-differentiable convex function satisfying $\gamma_1 \Id \prec \nabla^2 \phi \prec \gamma_2 \Id$. Then for any square integrable random variable $X$ on $\R^d$ it follows that:
    \begin{align*}
        \E[\phi(X)] - \phi(\E[X]) \in \left[\frac{\gamma_1}{2} \Var(X), \frac{\gamma_2}{2} \Var(X) \right]. \numberthis \label{eq:jensengap} 
    \end{align*}
\end{restatable}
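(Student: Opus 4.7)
The plan is to invoke Taylor's theorem with integral remainder centered at the mean, apply the Hessian bounds, and take expectation. Write $\mu = \E[X]$. Since $\phi$ is twice differentiable, for every $x$ in the support of $X$ we have
\begin{align*}
    \phi(x) - \phi(\mu) - \nabla \phi(\mu)^{\top}(x - \mu) = \int_0^1 (1-t)\,(x-\mu)^{\top} \nabla^2 \phi\bigl(\mu + t(x-\mu)\bigr)(x-\mu)\, dt.
\end{align*}
The quadratic form in the integrand is sandwiched, using $\gamma_1 \Id \prec \nabla^2 \phi \prec \gamma_2 \Id$, between $\gamma_1 \|x - \mu\|^2$ and $\gamma_2 \|x - \mu\|^2$. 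Integrating the $(1-t)$ factor against constants gives $\int_0^1 (1-t)\, dt = 1/2$, so the entire remainder lies in $[\tfrac{\gamma_1}{2}\|x-\mu\|^2, \tfrac{\gamma_2}{2}\|x-\mu\|^2]$ pointwise.

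Next I would take expectation over $X$ on both sides. Square integrability of $X$ and boundedness of $\nabla^2 \phi$ make the integrand jointly measurable and dominated by $\gamma_2 \|x-\mu\|^2$, so Fubini-Tonelli is justified and expectation commutes with the $t$-integral. The linear term $\E[\nabla\phi(\mu)^{\top}(X-\mu)] = \nabla\phi(\mu)^{\top}\E[X-\mu]$ vanishes since $\E[X] = \mu$. Pointwise sandwiching then gives
\begin{align*}
    \frac{\gamma_1}{2}\, \E\|X - \mu\|^2 \;\le\; \E[\phi(X)] - \phi(\mu) \;\le\; \frac{\gamma_2}{2}\, \E\|X - \mu\|^2.
\end{align*}
Finally, by the definition used in the preliminaries, $\Var(X) = \Tr(\Sigma_X) = \E\|X-\mu\|^2$, which yields the claim.

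I do not anticipate a real obstacle here: the only subtlety is that the intermediate point $\xi$ in the Lagrange form of Taylor's remainder is not a priori measurable in $x$, which is why I would use the integral form of the remainder from the start — this avoids any measurability concern and makes the exchange of expectation and integration completely routine under the stated integrability hypotheses.
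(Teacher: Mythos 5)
Your proof is correct, but it takes a different route from the paper's. The paper's argument is a two-line reduction to Jensen's inequality: the Hessian bounds say exactly that $\phi(x) - \gamma_1\norm{x}^2/2$ is convex and $\phi(x) - \gamma_2\norm{x}^2/2$ is concave, so applying Jensen to each of these auxiliary functions and using $\E[\norm{X}^2] - \norm{\E[X]}^2 = \Tr(\Sigma_X) = \Var(X)$ gives both sides of \eqref{eq:jensengap} at once. Your approach instead Taylor-expands around $\mu = \E[X]$ with the integral form of the remainder, sandwiches the quadratic form pointwise, and integrates; this is entirely valid and has the virtue of producing a pointwise (per-sample) bound before any expectation is taken, with the linear term killed by centering at the mean. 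The one technical point you should make explicit is why the integral remainder formula applies when $\phi$ is only assumed twice differentiable (not $C^2$): along any segment the restricted second derivative exists everywhere and is bounded by $\gamma_2\norm{x-\mu}^2$, so the restricted gradient is Lipschitz, hence absolutely continuous, and the Lebesgue fundamental theorem of calculus justifies the integration by parts; measurability of $t \mapsto \nabla^2\phi(\mu + t(x-\mu))$ follows since the Hessian is a pointwise limit of continuous difference quotients. The paper's Jensen-based argument sidesteps all of this, at the cost of being slightly less transparent about where the factor $1/2$ and the variance come from; your version makes both explicit. Your identification of $\Var(X)$ with $\E\norm{X-\mu}^2$ matches the paper's convention $\Var(X) = \Tr(\Sigma_X)$, so the conclusion is exactly \eqref{eq:jensengap}.
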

\begin{proof}
    From the assumption of the lemma it follows that $\phi(x) - \gamma_1 \norm{x}^2/2$ is convex and $\phi(x) - \gamma_2 \norm{x}^2/2$ is concave. Applying Jensen's inequality to each of these functions yields \eqref{eq:jensengap}.
\end{proof}

\subsection{Proofs for Section \ref{sec:linear}}

\wdresult*

\begin{proof}
    Let us decompose the unique minimizer $w^*$ as $w^* = \alpha u^* + \beta v^*$, where $u^*$ and $v^*$ are orthonormal and satisfy $u^*_{\High} = 0, v^*_{\Low} = 0$ (i.e. $u^*$ is a normalized version of the $\Low$ components of $w^*$ and $v^*$ is the same but for the $\High$ components). We claim that $y \inrprod{v^*}{x} > y \inrprod{u^*}{x}$ for $\pi$-a.e. $(x, y)$. Indeed, if this were not the case then by the assumption on $\pi$ we could choose an orthonormal vector $z^*$ with $z^*_{\Low} = 0$ that satisfies $y \inrprod{z^*}{x} > y \inrprod{u^*}{x}$ for $\pi$-a.e. $(x, y)$ and decrease the loss of $w^*$ by replacing $v^*$ with $z^*$.

    Now suppose that $\alpha > \beta$. Then we claim that replacing $\alpha$ and $\beta$ with $\gamma = \sqrt{(\alpha^2 + \beta^2)/2}$ yields a solution with lower loss than $w^*$.

    To see this, first observe that $2\gamma^2 = \alpha^2 + \beta^2$, so that the norm of the modified solution is the same as $w^*$. This implies that the weight decay penalty term in $\ell_{\beta}$ is unchanged.

    Furthermore, we have that $\gamma \in (\beta, \alpha)$, and that $\gamma - \beta > \alpha - \gamma$. This follows from the fact that $\gamma \ge (\alpha + \beta)/2$ by construction. This, combined with the fact that $y\inrprod{v^*}{x} > y\inrprod{u^*}{x}$, then implies for $\pi$-a.e. $(x, y)$ that:
    \begin{align*}
        y\inrprod{\gamma u^* + \gamma v^*}{x} > y\inrprod{\alpha u^* + \beta v^*}{x}. \numberthis \label{eq:wdinequal}
    \end{align*}

    Which contradicts the minimality of $w^*$. Therefore, we must have $\alpha \le \beta$, which gives the desired result.
\end{proof}

\ermresult*
\begin{proof}
    We can apply the same decomposition as in the proof of Theorem \ref{ermresult}; namely, $w^* = \alpha u^* + \beta v^*$ where $w^*$ is the max-margin solution and $u^*$ and $v^*$ are as before. Suppose again that $\alpha > \beta$ and let $\gamma = \sqrt{(\alpha^2 + \beta^2)/2}$. We claim that there exists $\epsilon \in (0, \gamma)$ such that $w = (\gamma - \epsilon) u^* + \gamma v^*$ satisfies $y\inrprod{w, x} \ge 1$ for $\pi$-a.e. $(x, y)$, which would contradict $w^*$ being the max-margin solution since $\norm{w}^2 < \norm{w^*}^2$.

    From the exact same logic as in the proof of Theorem \ref{ermresult}, we have:
    \begin{align*}
        y\inrprod{\gamma u^* + \gamma v^*}{x} - y\inrprod{w^*}{x} > 0 \numberthis \label{eq:wdinequal2}
    \end{align*}
    for $\pi$-a.e. $(x, y)$. Since $\supp(\pi)$ is compact, \eqref{eq:wdinequal2} attains a minimum $\kappa > 0$ over $\supp(\pi)$. Similarly, $y \inrprod{u^*}{x}$ attains a finite maximum. By choosing $\epsilon$ such that $y \inrprod{\epsilon u^*}{x} < \kappa$, we obtain the desired contradiction.
\end{proof}

\lsresult*
\begin{proof}
    We follow the outline in the proof sketch. We first observe that there exists $M$ such that it suffices to consider $\norm{w} \le M$; this is due to the fact that for $\alpha > 0$, we have $\lim_{\norm{w} \to \infty} \abs{\ell_{\LS, \alpha}(w)} = \infty$. With this in mind, let us use $Z$ to denote $w^T YX$ and use $\ell_{\LS, \alpha}(Z)$ to denote the loss in terms of this quantity. Then it follows that
    \begin{align*}
        \pdv{\ell_{\LS, \alpha}(Z)}{Z} &= \E\left[\sigma(Z) - 1 + \alpha/2\right], \numberthis \label{eq:lsdiv1} \\
        \pdv[2]{\ell_{\LS, \alpha}(Z)}{Z} &= \E\left[\sigma(Z)(1 - \sigma(Z))\right], \numberthis \label{eq:lsdiv2}
    \end{align*}
    where in both cases we applied the dominated convergence theorem, which is justified because $\ell_{\LS, \alpha}$ is smooth in $Z$ with bounded derivatives. Now since $\norm{w} \le M$ and the support of $X$ is compact, there exist $\gamma_1$ and $\gamma_2$ such that $\pdv[2]{\ell_{\LS, \alpha}(Z)}{Z} \in (\gamma_1, \gamma_2)$, which implies that $\ell_{\LS, \alpha}$ is strongly convex in $Z$ and satisfies the conditions of Lemma \ref{jensengap}.

    By Jensen's inequality, we then have that:
    \begin{align*}
        \ell_{\LS, \alpha}(Z) \ge \left(\frac{\alpha}{2} - 1\right) \log \sigma (\E[Z]) - \frac{\alpha}{2} \log \sigma (-\E[Z]). \numberthis \label{eq:jensenZ}
    \end{align*}
    Since $\E[YX_{\Low}] \neq 0$ by Assumption \ref{lsmixassumpt}, we can choose $w$ such that $w_{\High} = 0$ and $\E[Z] = \sigma^{-1}(1 - \alpha/2)$, which minimizes the RHS of \eqref{eq:jensenZ}. Let us use $\OPT$ to denote this minimum. Then by Lemma \ref{jensengap}, we have for any $w$ chosen as described:
    \begin{align*}
        \ell_{\LS, \alpha}(w) - \OPT \le \frac{\gamma_2}{2} w^T \Sigma_{YX, \Low} w. \numberthis \label{eq:lsupbound}
    \end{align*}
    On the other hand, if $w'$ is another solution satisfying $\norm{w'_{\High}} > \epsilon$, then Lemma \ref{jensengap} gives
    \begin{align*}
        \ell_{\LS, \alpha}(w') - \OPT \ge \frac{\gamma_1 \rho \epsilon^2}{2}, \numberthis \label{eq:lslowbound} 
    \end{align*}
    from which it is clear that for appropriate $\epsilon$ we cannot have $w'$ be a stationary point ($\rho$ above is the same as in Assumption \ref{lsmixassumpt}). That we can take $\epsilon \to 0$ as $\norm{\Sigma_{YX, \Low}} \to 0$ follows from \eqref{eq:lsupbound}.
\end{proof}

\mixresult*
\begin{proof}
    Let us first outline the overall steps of the proof, and the differences with the label smoothing case.
    \begin{enumerate}
        \item We first show that the loss \textit{conditioned on $\lambda$} is strongly convex in $w^{\top} Z_{\lambda}$. The conditioning on $\lambda$ here is necessary because $\lambda$ is a random variable, unlike $\alpha$ in the label smoothing case. The overall goal here is to use the same argument as for label smoothing, i.e. show that we can achieve the optimal lower bound in terms of $\E[w^{\top} Z_{\lambda}]$ using only $w_{\Low}$ and letting $\norm{\Sigma_{YX, \Low}} \to 0$.
        \item We cannot explicitly minimize the conditional loss like we did with label smoothing, since it is not possible with a fixed choice of $w$ to achieve $\sigma^{-1}(\E[w^{\top} Z_{\lambda}]) = \lambda$ for every $\lambda$ simultaneously. Instead, we will show that a stationary point of the conditional loss exists that uses only the dimensions of $w_{\Low}$. 
        \item Having shown the above, we can just reuse the same argument as before with Lemma \ref{jensengap} to prove the desired result.
    \end{enumerate}
    Let $\ell_{\MIX, \lambda}$ denote $\ell_{\MIX, \D_{\lambda}}$ with a fixed choice of $\lambda$ (i.e. after conditioning on $\lambda$), and let $R = w^{\top} Z_{\lambda}$. Then we can compute:
    \begin{align*}
        \pdv{\ell_{\MIX, \lambda}(R)}{R} &= \E\big[\lambda(\sigma(Y_1 R) - 1)Y_1 + (1 - \lambda) (\sigma(Y_2 R) - 1)Y_2\big], \numberthis \label{eq:mixgrad} \\
        \pdv[2]{\ell_{\MIX, \lambda}(R)}{R} &= \E\big[\lambda \sigma(Y_1 R) (1 - \sigma(Y_1 R)) + (1 - \lambda) \sigma(Y_2 R) (1 - \sigma(Y_2 R))\big], \numberthis \label{eq:mixhess}
    \end{align*} 
    where again we applied dominated convergence to the expectation with respect to $\pi$. Strong convexity follows from the same consideration as label smoothing; namely, we can consider $\norm{w} \le M$ as $\ell_{\MIX, \lambda}(R) \to \infty$ as $\norm{w} \to \infty$ so long as $\lambda$ is not 0 or 1 and $\pi_Y(y) > 0$, and this consequently implies \eqref{eq:mixhess} is lower bounded by some positive real number. 

    Now by conditional Jensen's inequality, we obtain the following lower bound for $\ell_{\MIX, \D_{\lambda}}$:
    \begin{align*}
        \ell_{\MIX, \D_{\lambda}}(w) \ge \E_{\lambda}\left[\E_{Y_1, Y_2}\left[\ell_{\MIX, \lambda}(\E\left[R \mid \lambda, Y_1, Y_2 \right])\right]\right]. \numberthis \label{eq:lmixlb}
    \end{align*}
    We now show that it is possible to minimize this lower bound while taking $w_{\High} = 0$. This is more difficult than it was in the label smoothing case, because it is no longer obvious that $\E[YX_{\Low}] \neq 0$ is sufficient for minimizing the RHS of \eqref{eq:lmixlb} due to the expectation with respect to $\lambda$. However, we can show the existence of a stationary point with $w_{\High} = 0$, even though we cannot provide an explicit construction.

    The idea is to consider the limiting behavior of \eqref{eq:lmixlb} as we take the values of $w_{\Low}$ to $-\infty$ and $\infty$. Note that we can take this limit into the expectation with respect to $\lambda$ by dominated convergence again. Let us consider the gradient with respect to $w$ of the RHS of \eqref{eq:lmixlb}. To make notation manageable, we will use $a_1 = \E[X \mid Y = 1], a_2 = \E[X \mid Y = -1]$, and $a_3 = \E[Z_{\lambda} \mid \lambda, Y_1 = 1, Y_2 = -1]$. We can then explicitly write out the gradient as the expectation with respect to $\lambda$ of the sum of the following three terms (considering the different cases for $Y_1, Y_2$):
    \begin{align*}
        \grad_w \E_{Y_1, Y_2}\left[\ell_{\MIX, \lambda}(\E\left[R \mid \lambda, Y_1, Y_2 \right])\right] &= \pi_Y(1)^2 \left(\sigma(w^{\top} a_1) - 1\right) a_1 \\
        &\quad - \pi_Y(-1)^2 \left(\sigma(-w^{\top} a_2) - 1\right) a_2 \\
        &\quad + 2\pi_Y(1) \pi_Y(-1) \bigg(\lambda \big(\sigma(w^{\top} a_3) - 1\big) \\
        &\quad \quad \quad - (1 - \lambda) \big(\sigma(-w^{\top} a_3) - 1\big)\bigg) a_3. \numberthis \label{eq:mixsum}
    \end{align*}
    The first two lines above are obtained from the fact that we can combine terms when $Y_1 = Y_2$, and the last line is by symmetry. Now we recall that by assumption $\E[YX_{\Low}] \neq 0$ and $\E[X] = 0$. Thus, WLOG, we can assume that $\E[X_{\Low} \mid Y = 1]_i > 0$ and $\E[X_{\Low} \mid Y = -1]_i < 0$ for each index $i$. 

    With this in mind, we consider first the case of what happens when the entries $w_{\Low} \to \infty$. Since $w^{\top} a_1 > 0$ and $w^{\top} a_2 < 0$, the first two terms in \eqref{eq:mixsum} vanish (independent of $\lambda$). On the other hand, for the third term, there are two cases to consider. Depending on $\lambda$, we have that the entries of $a_3$ are either strictly negative or strictly positive, with the exceptional case of $\lambda = \pi_Y(1)$ in which $a_3 = 0$. If the entries of $a_3$ are strictly negative, then $\big(\sigma(-w^{\top} a_3) - 1\big) \to 0$ and the coefficient becomes negative, so the third term is positive. Similarly, if $a_3$ is strictly positive, the coefficient is positive and the third term is still positive. Thus, as $w_{\Low} \to \infty$ every entry of \eqref{eq:mixsum} is positive.

    Similar arguments show that the opposite is true when we take $w_{\Low} \to -\infty$. Now by continuity of the gradient, it immediately follows that there is some choice of $w$ with only $w_{\Low}$ non-zero such that the expectation of $\eqref{eq:mixsum}$ with respect to $\lambda$ can be made to be zero. Using this choice of $w$ allows us to obtain an $R$ that minimizes the RHS of \eqref{eq:lmixlb}. 
    
    Now we have basically arrived at the same stage as the end of the label smoothing proof. By taking $\norm{\Sigma_{YX, \Low}} \to 0$ we can get arbitrary concentration around this optimal $R$, and by the same logic as the label smoothing proof the result follows.
\end{proof}

\subsection{Proofs for Section \ref{sec:multiclass}}

\lsgeneral*
\begin{proof}
    Let us first define:
    \begin{align*}
        \ell_{\LS, \alpha, y}(g) = -\E \bigg[(1 - \alpha) \log g^y(X) + \frac{\alpha}{k} \sum_{i = 1}^k \log g^i(X) \mid Y = y\bigg]. \numberthis \label{eq:condlsloss}
    \end{align*}
    Then we can decompose $\ell_{\LS, \alpha}(g)$ as follows:
    \begin{align*}
        \ell_{\LS, \alpha}(g) = \sum_{y = 1}^k \pi_Y(y) \ell_{\LS, \alpha, y}(g). \numberthis \label{eq:condlsdecomp}
    \end{align*}
    Once again, since we can restrict our attention to $g(X) \in [\gamma, 1 - \gamma]$ for some $\gamma$ (as the loss goes to infinity if $g(X) = 1$ on a set of positive $\pi_X$-measure with $\alpha > 0$), it is easy to verify that \eqref{eq:condlsloss} is strongly convex in $g(X)$. The desired result then follows by applying Lemma \ref{jensengap} with the regular conditional distribution $\pi_{X \mid Y = y}$ for each term in \eqref{eq:condlsdecomp}.
\end{proof}

\mixgeneral*
\begin{proof}
    The proof follows an identical structure to that of Proposition \ref{lsgeneral}. In particular, we again define the following conditional loss:
    \begin{align*}
        \ell_{\MIX, \lambda, y_1, y_2} = -\E[\lambda \log g^{y_1}(Z_{\lambda}) + (1 - \lambda) \log g^{y_2}(Z_{\lambda}) \mid y_1, y_2, \lambda]. \numberthis \label{eq:condmixloss}
    \end{align*}
    And we can then decompose $\ell_{\MIX, \D_{\lambda}}$ as:
    \begin{align*}
        \ell_{\MIX, \lambda}(g) = \sum_{y_1 = 1}^k \sum_{y_2 = 1}^k \pi_Y(y_1) \pi_Y(y_2) \E_{\lambda \sim \D_{\lambda}} [\ell_{\MIX, \lambda, y_1, y_2}(g)]. \numberthis \label{eq:condmixdecomp}
    \end{align*}
    Since $\D_{\lambda}$ is not a point mass on $0$ or $1$, we can restrict ourselves to $g(X) \in [\gamma, 1 - \gamma]$ for some $\gamma$ as before, and strong convexity in $g(X)$ of the conditional loss \eqref{eq:condmixloss} again follows. We then apply Lemma \ref{jensengap} to obtain the result.
\end{proof}

\section{Comparison to Settings of Prior Work}\label{sec:priorwork}
Here we review the settings of \citet{midpointmix} and \citet{zou2023benefits} in greater detail to provide a more precise comparison to the setting in our work.

\textbf{Setting of \citet{midpointmix}.} The authors consider a \textit{multi-view} data model inspired by \citet{allenzhu2021understanding}; namely, their data distribution $\pi$ is a multi-class distribution supported on $\R^{Pd} \times [k]$ where $P$ corresponds to the number of \textit{patches} in each data point. Essentially, for $(x, y) \sim \pi$, we view $x$ as $x = \{x^{(1)}, x^{(2)}, ..., x^{(P)}\}$ with each $x^{(i)} \in \R^d$. The purpose of this partitioning is so that features related to the target class can appear in some patches and noise can appear in the remaining patches. In particular, the authors consider two target features per class ($v_{y, 1}$ and $v_{y, 2}$) that appear in a constant number of \textit{signal patches}, while all other patches in a data point $x$ correspond to low magnitude \textit{feature noise}, i.e. these patches consist of some linear combination of features $v_{s, j}$ for $s \neq y$. Furthermore, each signal patch has only a single feature (either $v_{y, 1}$ or $v_{y, 2}$) with a random weight $\beta$ such that for any signal patch $x^{(p)} = \beta v_{y, 1}$ there is another patch $x^{(q)} = (C - \beta) v_{y, 2}$ for a fixed parameter $C$. The authors emphasize that a model that has the same correlation with both $v_{y, 1}$ and $v_{y, 2}$ will achieve lower variance, as it will have a constant total correlation and be insensitive to the variation in $\beta$.

For this type of data distribution, the authors consider the training dynamics of a two-layer convolutional neural network with smoothed ReLU activations and non-trainable second layer weights (not an issue in this case, since second layer weights can be absorbed into the first layer). They analyze training using the empirical cross-entropy, as well as the Mixup cross-entropy for the specific case of a mixing distribution $\D_{\lambda}$ that is just a point mass on $1/2$ (Midpoint Mixup). The authors prove that, running gradient descent on the empirical cross-entropy leads (with high probability) to a model that only learns one feature for almost all classes, while doing the same for Midpoint Mixup yields a model that learns both features per class. The results are asymptotic; the authors consider all hyperparameters in their setup to be sufficiently large (even the number of classes $k$) or small.   

\textbf{Setting of \citet{zou2023benefits}.} Similar to \citet{midpointmix}, \citet{zou2023benefits} also works in a setting motivated by \citet{allenzhu2021understanding} and consider data consisting of patches. However, they focus on binary classification, and their data distribution $\pi$ is supported on $\R^{Pd} \times \{1, 2\}$. The data $x$ has exactly one (randomly selected) patch $x^{(i)}$ that contains a target feature; \citet{zou2023benefits} delineate one type of target feature $v$ as \textit{common} and another type $v'$ as \textit{rare}. There are also up to $b$ other patches in $x$ that consist of common features from other classes (i.e. feature noise, like in \citet{midpointmix}). Lastly, different from \citet{midpointmix}, \citet{zou2023benefits} consider the leftover patches to consist of i.i.d. Gaussian noise.

\citet{zou2023benefits} also consider the training dynamics of a two-layer convolutional neural network with frozen second layer weights on this type of data distribution, but use a squared activation instead of a smoothed ReLU. Unlike the results of \citet{midpointmix} which are stated entirely in terms of whether the features $v_{y, 1}$ and $v_{y, 2}$ are learned in a certain sense, \citet{zou2023benefits} directly prove a lower bound on the test error of models trained using gradient descent on the empirical cross-entropy while also showing that the test error of models trained on the empirical Mixup cross-entropy is vanishing small at some time step during model training. Essentially, this is due to the non-Mixup models failing to learn the rare feature $v'$ for both classes. Their results apply to Mixup with a mixing distribution $\D_{\lambda}$ that is any point mass on $(0.5, 1)$.

\textbf{Differences in our setting.} Both \citet{midpointmix} and \citet{zou2023benefits} prove results concerning gradient descent dynamics, whereas our results directly consider the minimizers of the population losses associated with weight decay, label smoothing, and Mixup. Here our choice to work with the population losses is not substantially different from \citet{midpointmix} and \citet{zou2023benefits}, since although they work with the empirical losses they are in an asymptotic setting and large swaths of the proofs in both papers rely on concentration of measure arguments. However, we do differ substantially in that our main results apply to linear models -- this makes our results less practical but technically much simpler than the results of \citet{midpointmix} and \citet{zou2023benefits}, with the added benefit that we can also handle any symmetric mixing distribution $\D_{\lambda}$. 

Furthermore, since our main results work in this linear setting, we also adopt a much simpler data distribution setup. We do not consider our input data $x$ as partitioned into patches, instead we merely designate some subset of the input dimensions as ``low variance'' and the complementary subset as ``high variance''. In this sense, we do not have explicit feature vectors associated with each class $y$ like the $v_{y, 1}, v_{y, 2}$ of \citet{midpointmix} or the $v, v'$ of \citet{zou2023benefits}.

\section{Additional Experiments}\label{sec:addexps}

\subsection{Visualization of Decision Boundaries}\label{sec:boundaries}

\begin{figure*}[htp]
\centering
    \subfigure[Canonical Hyperparameters]{\includegraphics[scale=0.27]{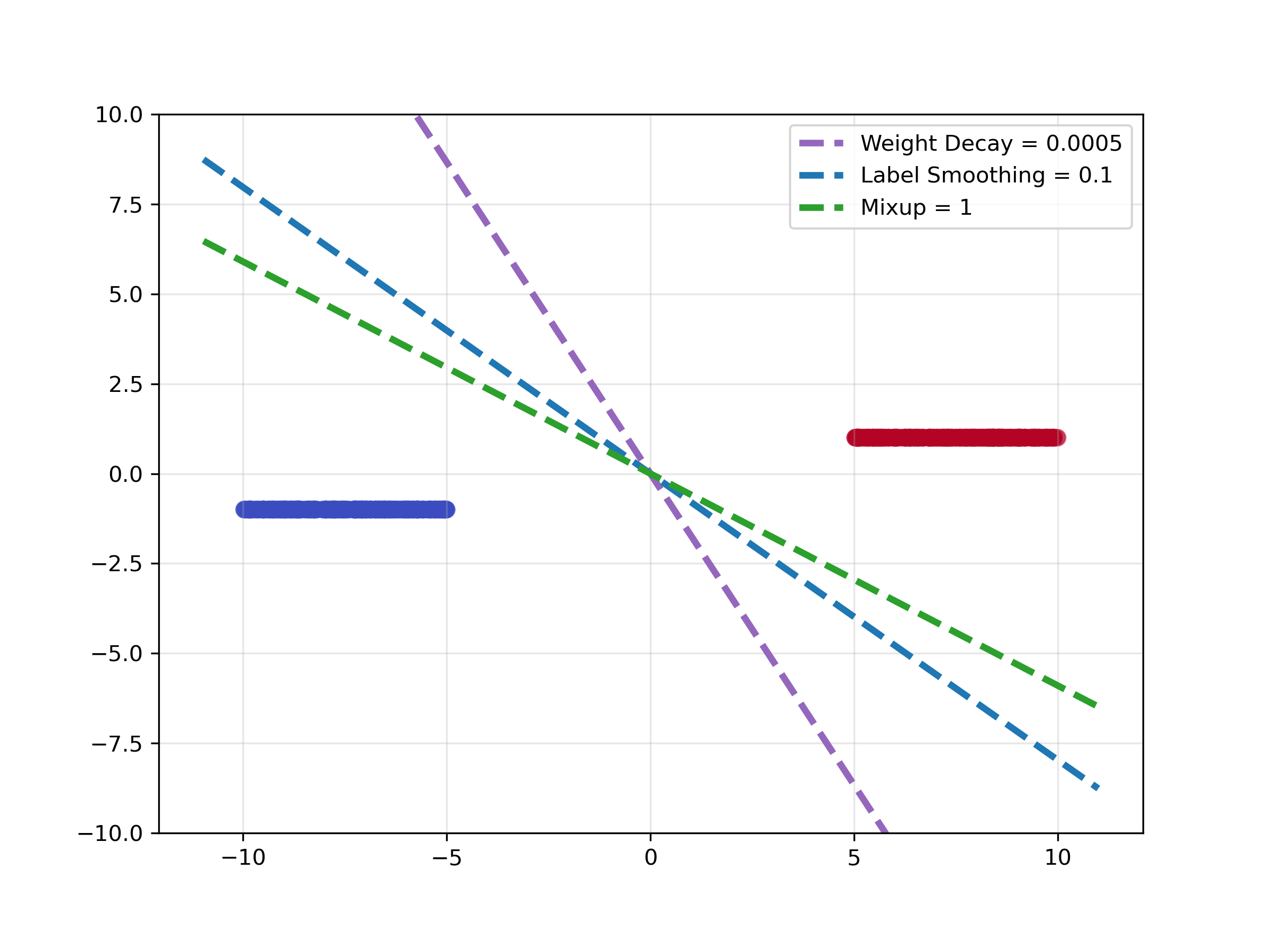}} \quad
    \subfigure[Scaled Hyperparameters (1)]{\includegraphics[scale=0.27]{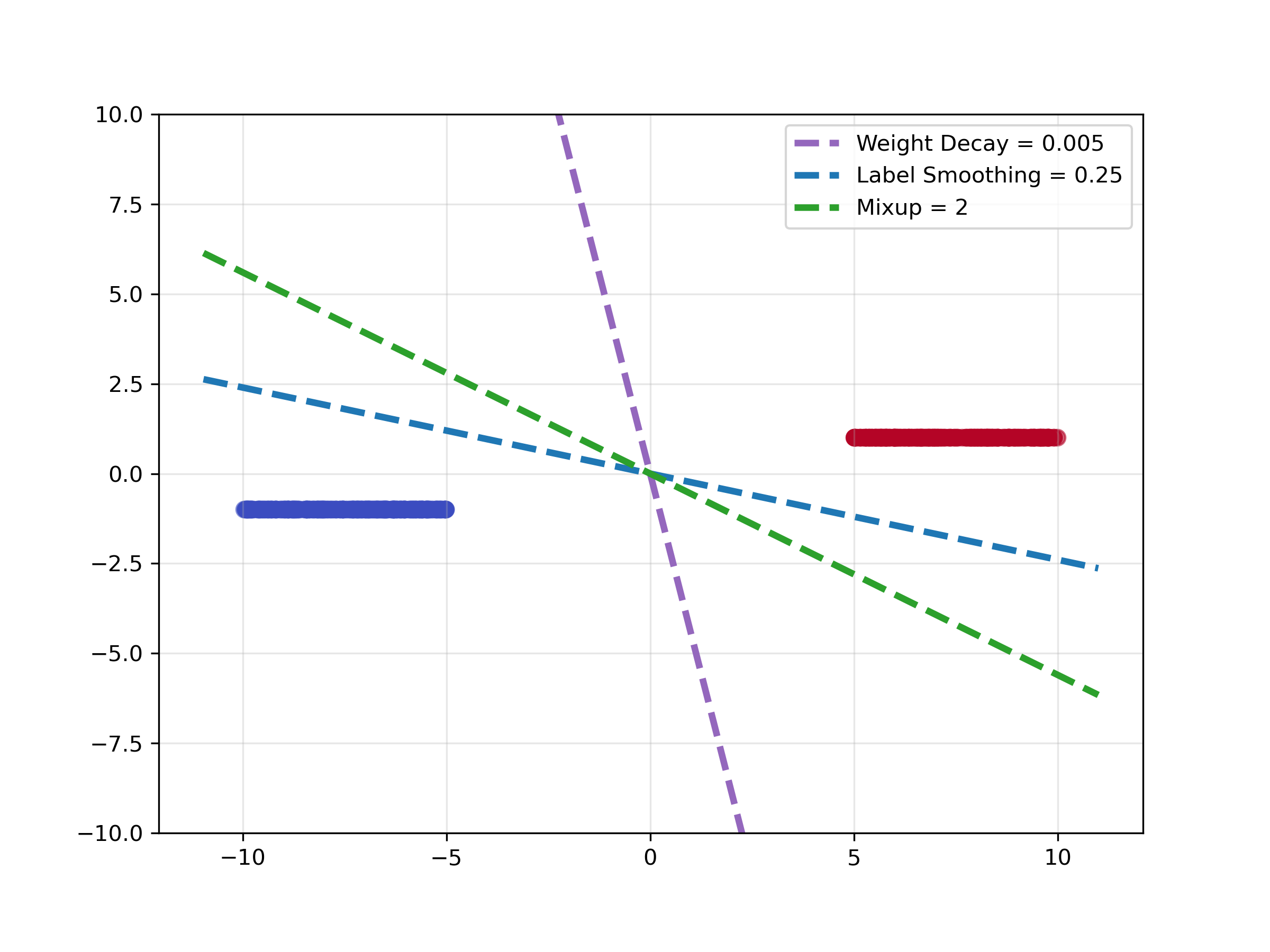}} \quad
    \subfigure[Scaled Hyperparameters (2)]{\includegraphics[scale=0.27]{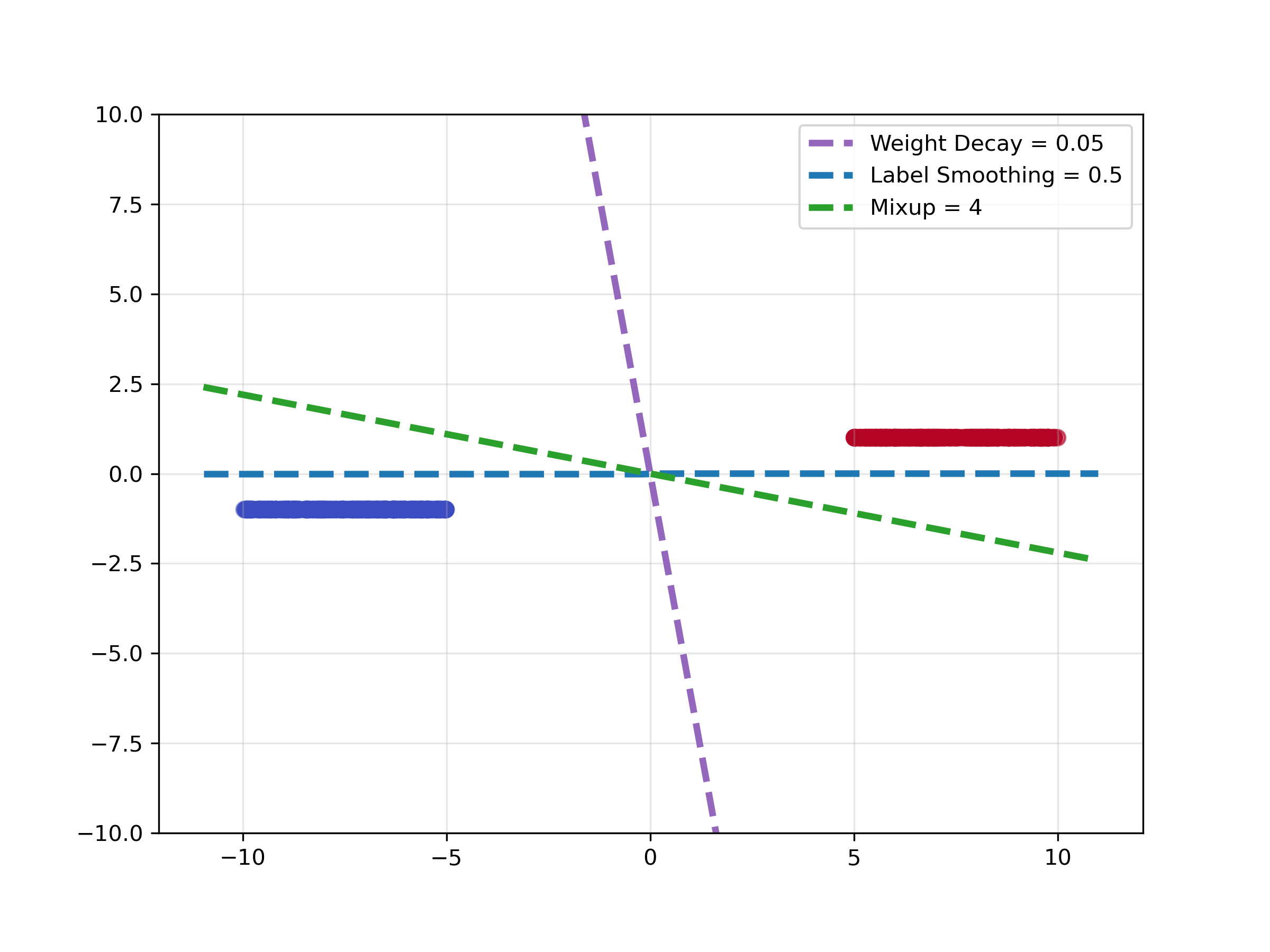}}
\caption{Visualization of weight decay, label smoothing, and Mixup decision boundaries. Figure (a) considers canonical hyperparameter choices, and Figures (b) and (c) illustrate the effects of scaling these choices.}
\label{decboundaries}
\end{figure*}

To provide some intuition for Theorems \ref{wdresult} to \ref{mixresult}, we visualize the decision boundaries of trained logistic regression models on 2-D data. In particular, denoting the classes as usual by $y \in \{-1, +1\}$, we consider a simple data distribution in which the first coordinate is distributed uniformly on $[y, 10y]$ and the second coordinate is fixed to be $0.1y$.

This data distribution is linearly separable in each coordinate; however, the second coordinate is fixed and thus has no conditional variance. Consequently, Theorems \ref{lsresult} and \ref{mixresult} predict that minimizing the population label smoothing and Mixup losses on this data should lead to learning a model whose decision boundary is aligned with the $x$-axis (i.e. we only use the second coordinate to determine which class we predict). 

To verify this, we visualize the decision boundaries of logistic regression models trained on $500$ points sampled from this distribution using multiple different settings of weight decay, label smoothing, and Mixup in Figure \ref{decboundaries}. We train for $500$ epochs using full batch SGD with a learning rate of $10^{-2}$, although our results were not sensitive to these choices. We use a mixing distribution of $\Beta(\alpha, \alpha)$ for Mixup (as is standard), and consider the canonical hyperparameter choices of $5 \times 10^{-4}$ for weight decay, $0.1$ for label smoothing, and $\Beta(1, 1)$ for Mixup \citep{zhang2018mixup, muller2020does} in Figure \ref{decboundaries} (a) and also check the effect of scaling these hyperparameters in Figure \ref{decboundaries} (b) and (c). 

As can be seen from the results, the label smoothing and Mixup decision boundaries are much closer to being aligned with the $x$-axis than the weight decay boundary, with the alignment getting stronger with more extreme choices of the hyperparameters. The fact that the boundaries are not exactly aligned with the $x$-axis can be attributed to the fact that we are not exactly minimizing the population loss, since we are in the finite sample, fixed training horizon setting. That the label smoothing boundary is more aligned to the $x$-axis than the Mixup boundary is also to be expected, since the Mixup loss introduces randomness in the form of the mixing distribution which makes it more difficult to minimize. Lastly, the fact that the boundaries for both label smoothing and Mixup become more aligned with the $x$-axis at more extreme hyperparameter values can be intuitively explained by the fact that the loss incurred by predicting a probability close to 1 for either class increases as we scale the hyperparameters, i.e. we suffer more loss from relying on the high variance first coordinate.

\begin{remark}
    We should point out that, while our theoretical and empirical results in Figure \ref{decboundaries} show a clear difference in the types of solutions learned when training linear classifiers using label smoothing, Mixup, and weight decay, it is not always the case that the augmented losses lead to different solutions than ERM (with possibly weight decay). Indeed, \citet{muthu2021} and \citet{ohmixup} both showed that for different settings of Gaussian data, ERM and Mixup can lead to learning the same solution. However, the settings of these prior works don't fall within our scope as we consider distributions with \textit{compact support}, which ends up being an important property for proving our results.
\end{remark}

\subsection{Direct Verification of Theory}\label{sec:directver}
Here we directly analyze the training of logistic regression models on a synthetic data distribution that exactly follows the assumptions of Definition \ref{datadist}; the following definition generalizes the 2-D distribution we visualized in Section \ref{sec:boundaries}.

\begin{definition}[Synthetic Data]\label{synthdatadef}
    We define a distribution $\pi_{\gamma}$ parameterized by $\gamma$ (where $0 < \gamma < 1$) on $\R^d \times \{-1, 1\}$ with $\pi_{\gamma, Y}(1) = 1/2$ and $x \sim \pi_{\gamma, X \mid Y = y}$ satisfying:
    \begin{enumerate}
        \item \textbf{First $\lfloor d/2 \rfloor$ dimensions are constant but small.} We have $x_i = \gamma y$ for $i \le \lfloor d/2 \rfloor$.
        \item \textbf{Last $d - \lfloor d/2 \rfloor$ dimensions are high variance.} We have $x_i \sim \Uni([y, 100y])$ for $i > \lfloor d/2 \rfloor$.
    \end{enumerate}
\end{definition}
In other words, we consider data where (conditional on the label) the first half of the dimensions are fixed (corresponding to $\Low$) and the second half are i.i.d. high variance uniform (corresponding to $\High$). We sample $n = 5000$ data points according to Definition \ref{synthdatadef} with $d = 10$ and $\gamma = 0.1$ and train logistic regression models across a range of settings for weight decay, label smoothing, and Mixup. The choice of $\gamma$ here, as well as the range of values for the dimensions in $\High$, is relatively arbitrary; we verified our empirical results hold for different scales of $\gamma$ and ranges for $\High$. The empirical results also do not depend on the fact that the $\Low$ coordinates are zero variance -- we checked that they still hold when adding a small magnitude uniform noise to the $\Low$ coordinates. 

For model training, we consider 20 uniformly spaced values in $[0, 0.1]$ for the weight decay $\lambda$ parameter and in $[0, 0.75]$ for the label smoothing $\alpha$ parameter, where the upper bound for the label smoothing parameter space is obtained from the experiments of \citet{muller2020does}. For Mixup, we fix the mixing distribution to be the canonical choice of $\Beta(\alpha, \alpha)$ introduced by \citet{zhang2018mixup} and consider 20 uniformly spaced $\alpha$ values in $[0, 8]$ (with $\alpha = 0$ corresponding to ERM), which effectively covers the range of Mixup hyperparameter values used in practice.

We train all models for 100 epochs using AdamW with the standard hyperparameters of $\beta_1 = 0.9, \beta_2 = 0.999$,  a learning rate of $5 \times 10^{-3}$, and a batch size of 500. At the end of training, we compute $\norm{w_{\High}}$ (i.e. the norm of the weight vector in the last 5 dimensions) for each trained model. For each model setting, we report the mean and standard deviation of $\norm{w_{\High}}$ over 5 training runs in Figure \ref{synthplots}.

\begin{figure*}[htp]
\centering
    \subfigure[Weight Decay]{\includegraphics[scale=0.21]{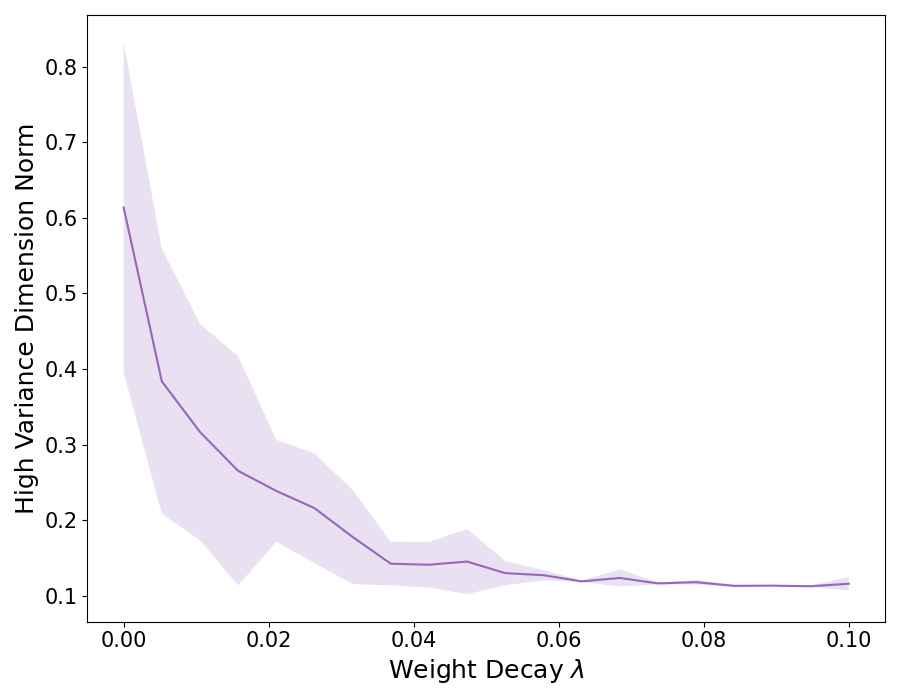}} \quad
    \subfigure[Label Smoothing]{\includegraphics[scale=0.21]{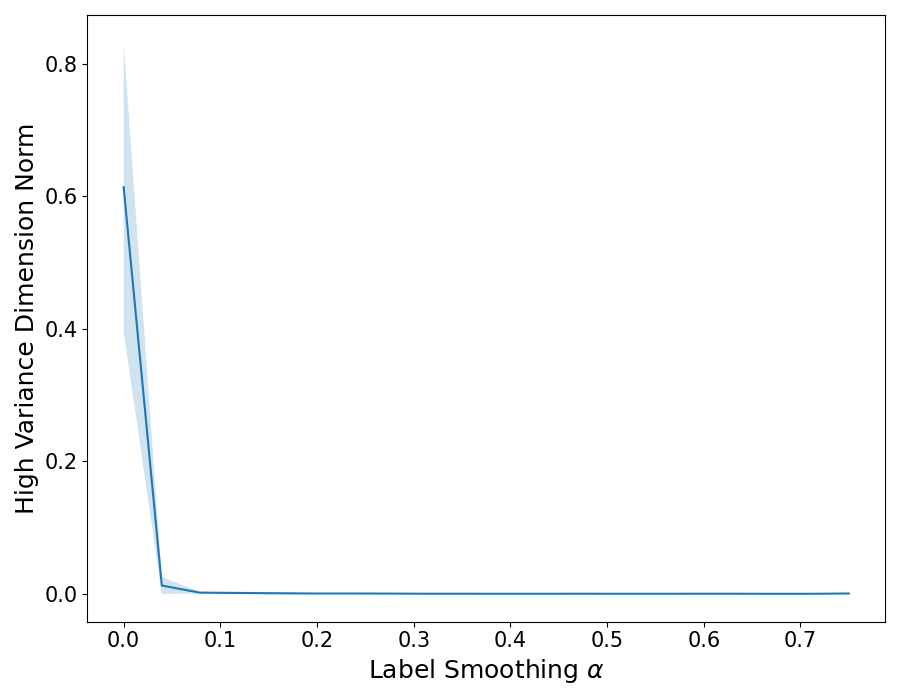}} \quad
    \subfigure[Mixup]{\includegraphics[scale=0.21]{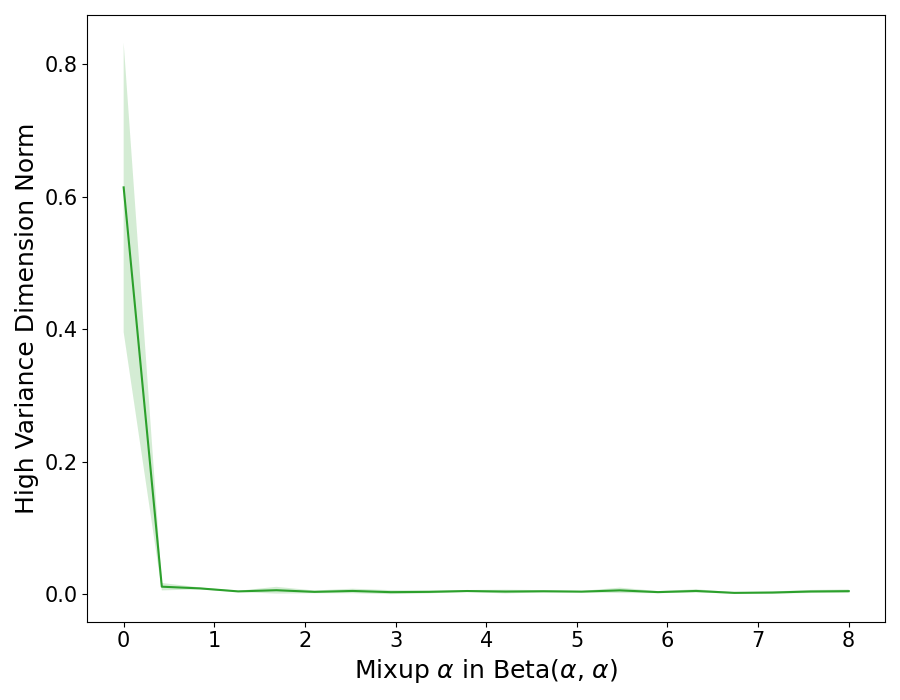}}
\caption{Final weight norm of logistic regression model in the high variance dimensions ($\norm{w_{\High}}$) for various hyperparameter settings on the synthetic data distribution of Section \ref{sec:directver}.}
\label{synthplots}
\end{figure*}

As can be seen from the results, the weight decay models always have non-trivial values for $\norm{w_{\High}}$ (even for large values of $\lambda$), whereas the label smoothing and Mixup models very quickly converge to a $\norm{w_{\High}}$ of effectively zero as their respective hyperparameters move away from the ERM regime ($\alpha = 0$ in both cases). This matches the behavior predicted by Theorems \ref{wdresult} to \ref{mixresult}.

\subsection{Full ResNet Results}\label{sec:deeper}
Here we collect final test error/variance results for the plots shown in Section \ref{sec:hidden} and also provide analogous plots and results for ResNet-50 and ResNet-101 models. In all of the following, we abbreviate weight decay as ``WD'' and label smoothing as ``LS''. 

\subsubsection{ResNet-18 Final Results}\label{sec:moreresnet18}
Tables \ref{tab:resnet18cifar10} and \ref{tab:resnet18cifar100} show the end-of-training test error, activation variance, and output variance results for the experiments in Figure \ref{fig:resnet18}.

\begin{table}[h!]
    \centering
    \begin{tabular}{|c|c|c|c|}
         \hline
         Method & Test Error & Activation Variance & Output Variance \\
         \hline
         ERM + WD & $8.40 \pm 0.58$ & $198 \pm 5$ & $0.133 \pm 0.008$\\
         \hline
         ERM + WD + LS & $7.96 \pm 0.30$ & $33.0 \pm 5.9$ &  $0.099 \pm 0.003$ \\
         \hline
         ERM + WD + Mixup & $\mathbf{5.90} \pm 0.21$ & $18.4 \pm 0.4$ & $0.059 \pm 0.002$ \\
         \hline
    \end{tabular}
    \caption{Final results (mean test error/variance and one standard deviation over 5 runs) for ResNet-18 experiments on CIFAR-10.}
    \label{tab:resnet18cifar10}
\end{table} 

\begin{table}[h!]
    \centering
    \begin{tabular}{|c|c|c|c|}
         \hline
         Method & Test Error &  Activation Variance & Output Variance \\
         \hline
         ERM + WD & $31.59 \pm 0.36$ & $673 \pm 23$ & $0.406 \pm 0.007$\\
         \hline
         ERM + WD + LS & $28.55 \pm 4.7$ & $54.7 \pm 5.6$ & $0.188 \pm 0.027$ \\
         \hline
         ERM + WD + Mixup & $\mathbf{26.83} \pm 0.44$ & $82.2 \pm 1.2$ & $0.143 \pm 0.004$\\
         \hline
    \end{tabular}
    \caption{Final results (mean test error/variance and one standard deviation over 5 runs) for ResNet-18 experiments on CIFAR-100.}
    \label{tab:resnet18cifar100}
\end{table} 

\subsubsection{ResNet-50 All Results}\label{sec:resnet50}
\begin{figure*}[htp]
\centering
    \subfigure[CIFAR-10 Test Error]{\includegraphics[scale=0.18]{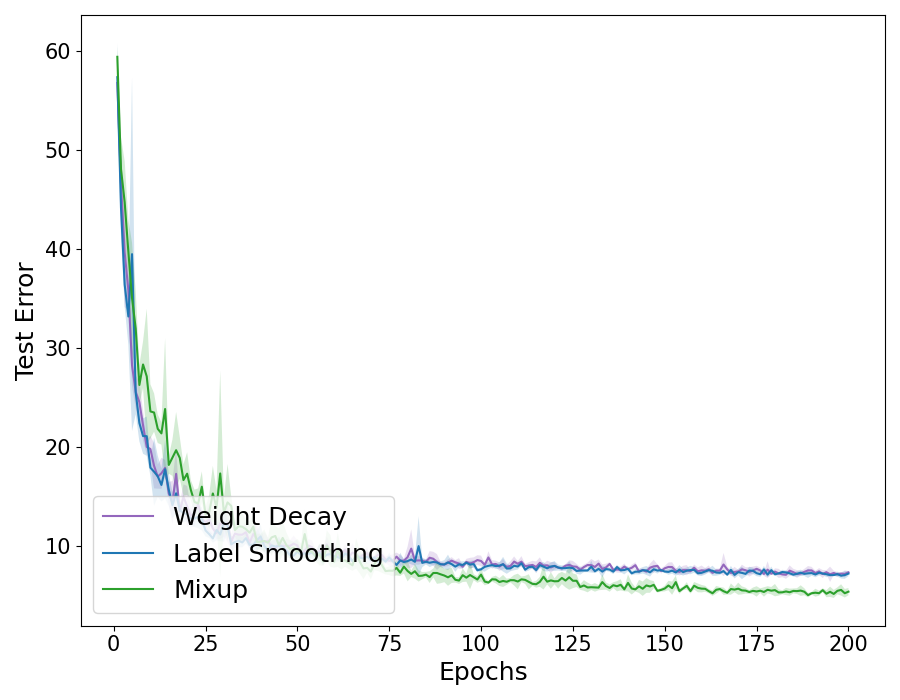}} \quad
    \subfigure[CIFAR-10 Activation Variance]{\includegraphics[scale=0.18]{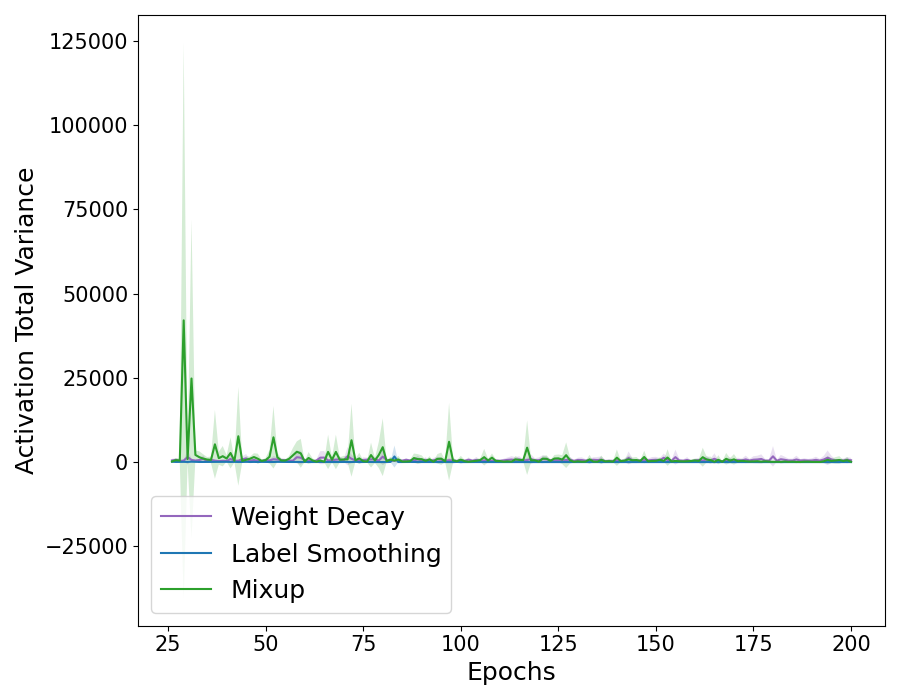}} \quad
    \subfigure[CIFAR-10 Output Variance]{\includegraphics[scale=0.18]{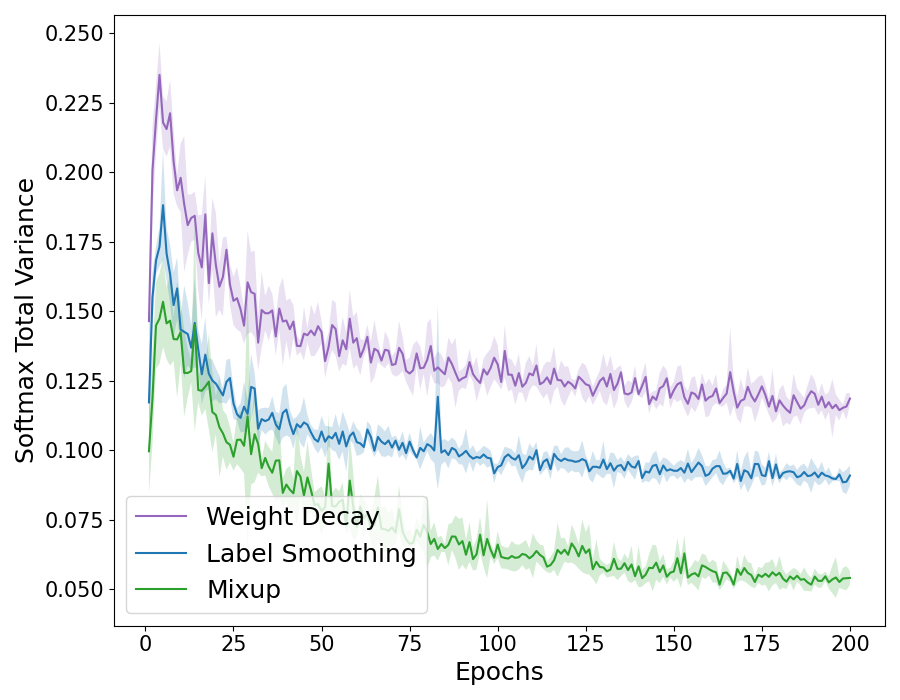}} \quad
    \subfigure[CIFAR-100 Test Error]{\includegraphics[scale=0.18]{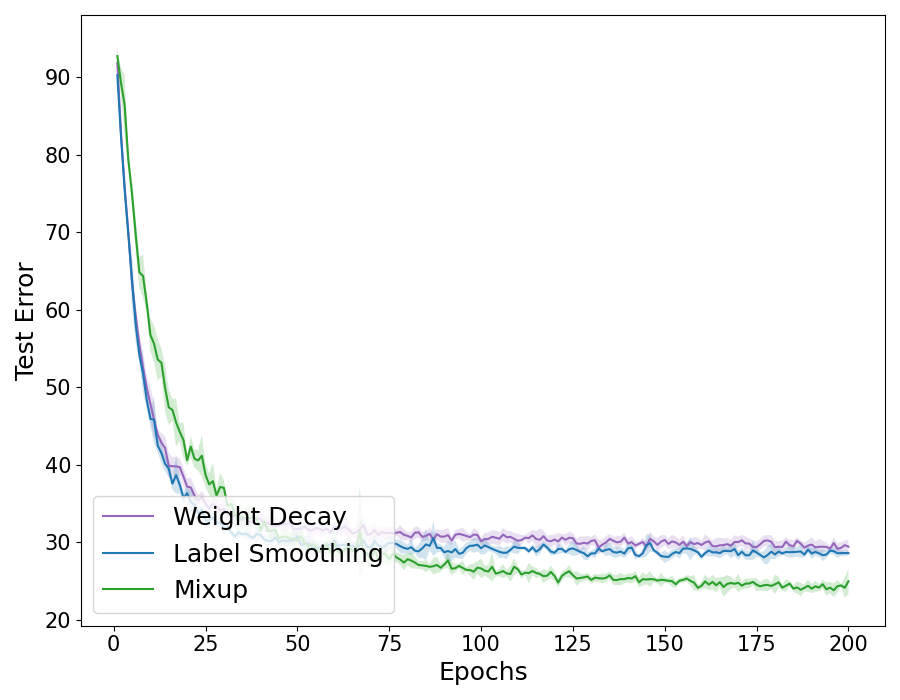}} \quad
    \subfigure[CIFAR-100 Activation Variance]{\includegraphics[scale=0.18]{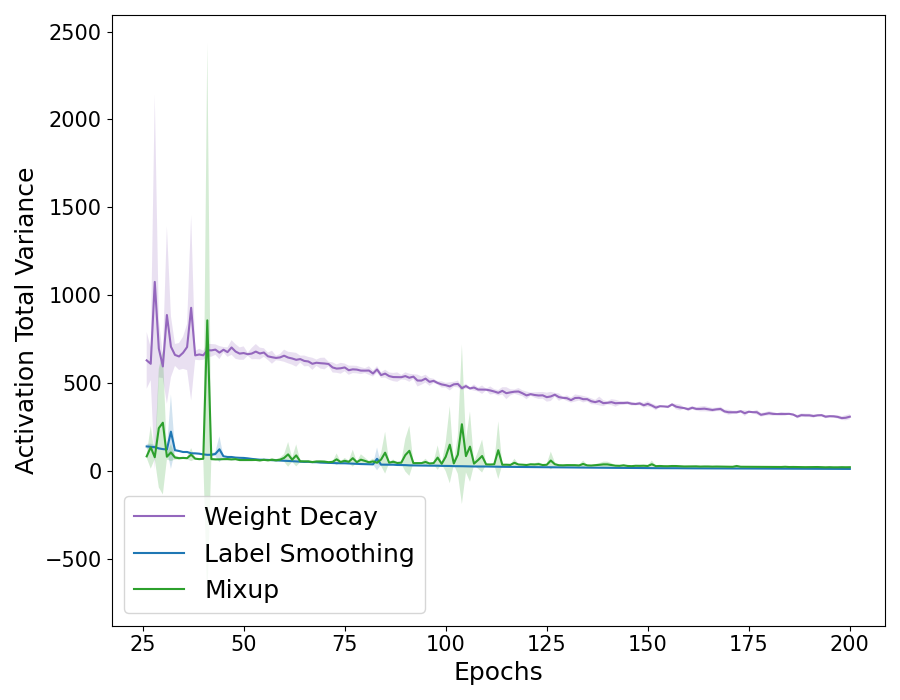}} \quad
    \subfigure[CIFAR-100 Output Variance]{\includegraphics[scale=0.18]{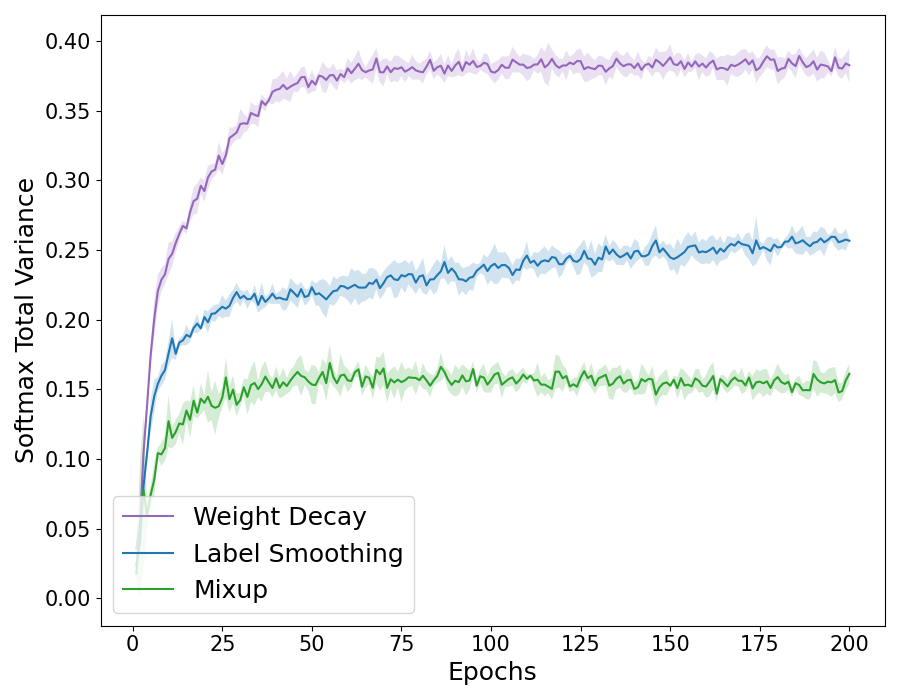}}
\caption{ResNet-50 final test errors, penultimate layer activation variances, and output probability variances on CIFAR-10 and CIFAR-100.}
\label{fig:resnet50}
\end{figure*}

We train ResNet-50 models under the same hyperparameters as in Section \ref{sec:hidden}, except for a batch size of 512 due to memory constraints. ResNet-50 results analogous to those shown in Figure \ref{fig:resnet18} are shown in Figure \ref{fig:resnet50}. Similarly, final error and variance results are shown in Tables \ref{tab:resnet50cifar10} and \ref{tab:resnet50cifar100}.

We observe that while the same trends hold as in the case of ResNet-18, there is significantly more variance in the computed activation variances for Mixup on CIFAR-10. In this particular case, we found that there were still significant oscillations in activation variances even towards the end of the training horizon. This may in part be attributable to reduced batch size, but we did not investigate this further.

\begin{table}[h!]
    \centering
    \begin{tabular}{|c|c|c|c|}
         \hline
         Method & Test Error & Activation Variance & Output Variance \\
         \hline
         ERM + WD & $7.36 \pm 0.13$ & $551 \pm 711$ & $0.119 \pm 0.002$\\
         \hline
         ERM + WD + LS & $7.24 \pm 0.26$ & $6.35 \pm 3.47$ &  $0.091 \pm 0.004$ \\
         \hline
         ERM + WD + Mixup & $\mathbf{5.41} \pm 0.37$ & $95 \pm 115$ & $0.054 \pm 0.003$ \\
         \hline
    \end{tabular}
    \caption{Final results (mean test error/variance and one standard deviation over 5 runs) for ResNet-50 experiments on CIFAR-10.}
    \label{tab:resnet50cifar10}
\end{table} 

\begin{table}[h!]
    \centering
    \begin{tabular}{|c|c|c|c|}
         \hline
         Method & Test Error &  Activation Variance & Output Variance \\
         \hline
         ERM + WD & $29.41 \pm 0.94$ & $310 \pm 14$ & $0.383 \pm 0.012$\\
         \hline
         ERM + WD + LS & $28.58 \pm 0.28$ & $13.14 \pm 0.20$ & $0.257 \pm 0.001$ \\
         \hline
         ERM + WD + Mixup & $\mathbf{24.94} \pm 1.63$ & $22.95 \pm 1.52$ & $0.161 \pm 0.007$\\
         \hline
    \end{tabular}
    \caption{Final results (mean test error/variance and one standard deviation over 5 runs) for ResNet-50 experiments on CIFAR-100.}
    \label{tab:resnet50cifar100}
\end{table} 

\subsubsection{ResNet-101 All Results}
\begin{figure*}[htp]
\centering
    \subfigure[CIFAR-10 Test Error]{\includegraphics[scale=0.18]{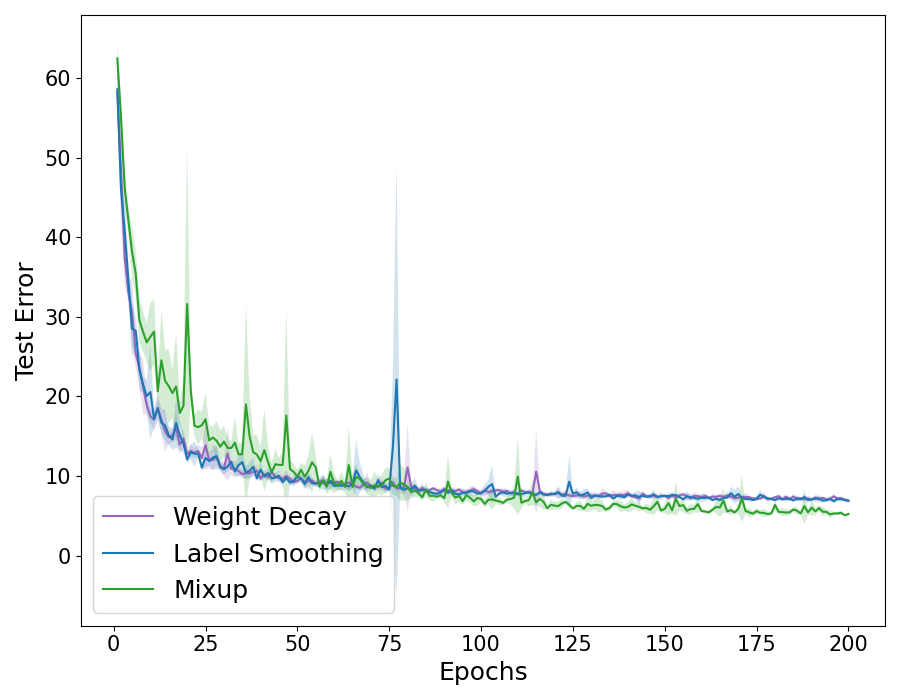}} \quad
    \subfigure[CIFAR-10 Activation Variance]{\includegraphics[scale=0.18]{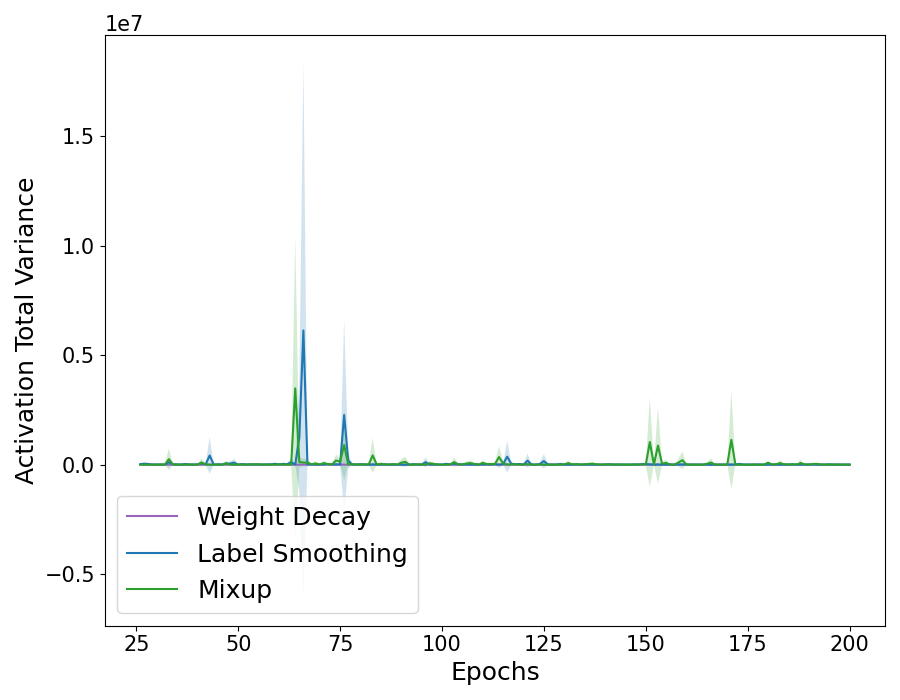}} \quad
    \subfigure[CIFAR-10 Output Variance]{\includegraphics[scale=0.18]{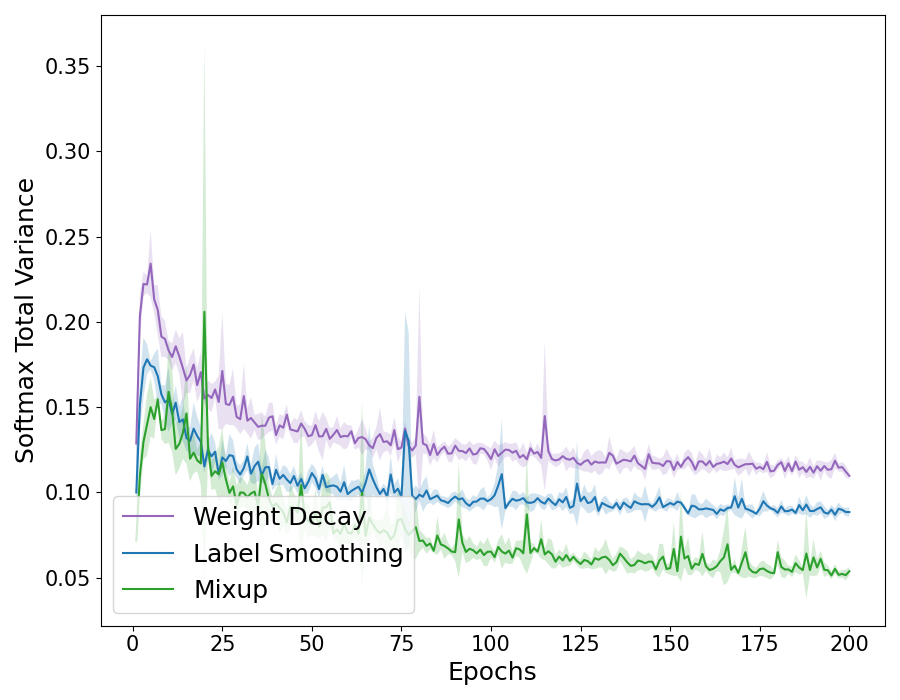}} \quad
    \subfigure[CIFAR-100 Test Error]{\includegraphics[scale=0.18]{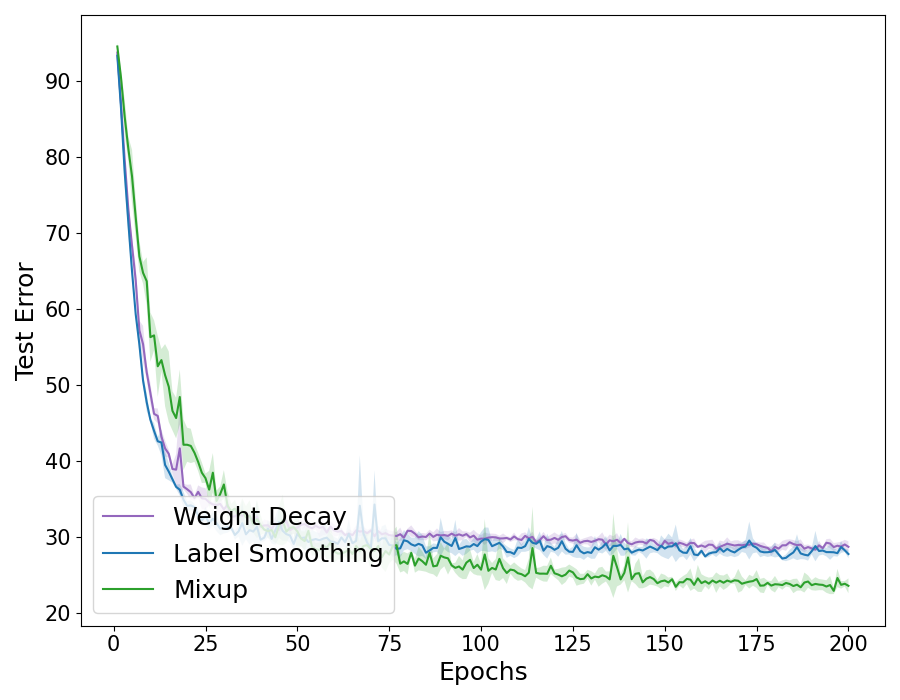}} \quad
    \subfigure[CIFAR-100 Activation Variance]{\includegraphics[scale=0.18]{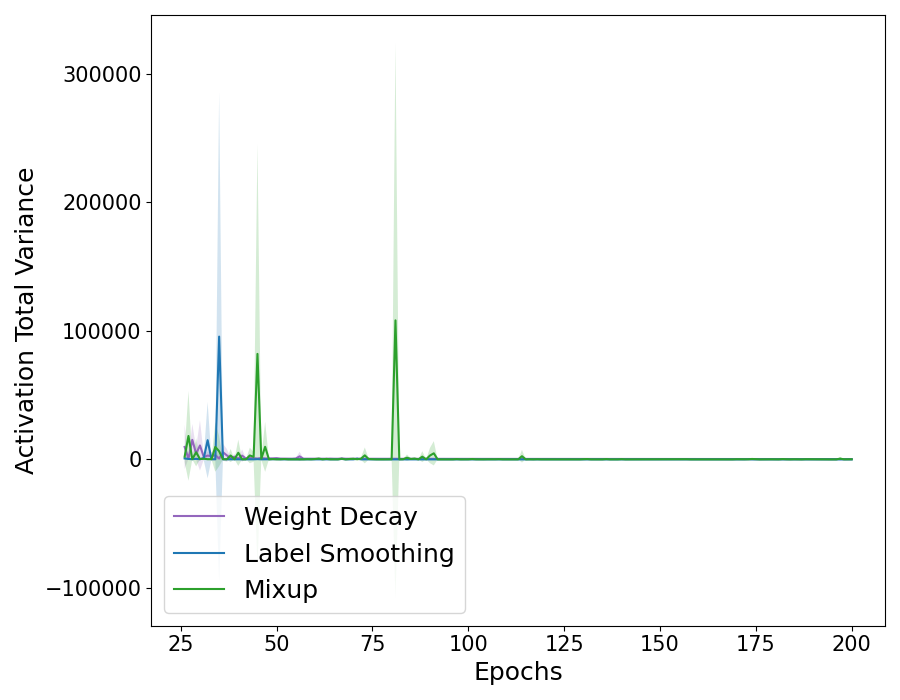}} \quad
    \subfigure[CIFAR-100 Output Variance]{\includegraphics[scale=0.18]{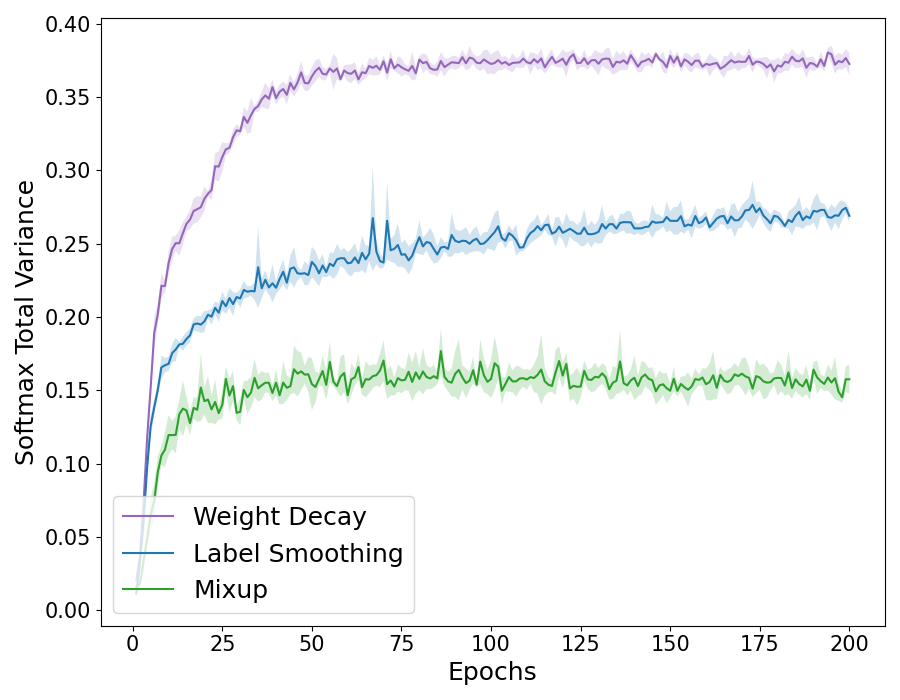}}
\caption{ResNet-101 final test errors, penultimate layer activation variances, and output probability variances on CIFAR-10 and CIFAR-100.}
\label{fig:resnet101}
\end{figure*}

We also train ResNet-101 models under the same hyperparameters as in Section \ref{sec:hidden}, except once again for a batch size of 512 due to memory constraints. ResNet-101 results analogous to those shown in Figure \ref{fig:resnet18} are shown in Figure \ref{fig:resnet101}. Similarly, final error and variance results are shown in Tables \ref{tab:resnet101cifar10} and \ref{tab:resnet101cifar100}.

Here we see that the variance oscillation behavior that we mentioned in Appendix \ref{sec:resnet50} is even more pronounced for the CIFAR-10 results, suggesting that this behavior is amplified for larger models. Once again, it is not clear what properties of CIFAR-10 lead to highly oscillatory activation variances for some initializations, but we again suspect that reduced batch size in training at least plays some role. The CIFAR-100 results remain consistent, although there is still some oscillatory behavior for the Mixup results.

\begin{table}[h!]
    \centering
    \begin{tabular}{|c|c|c|c|}
         \hline
         Method & Test Error & Activation Variance & Output Variance \\
         \hline
         ERM + WD & $6.87 \pm 0.17$ & $491 \pm 235$ & $0.110 \pm 0.002$\\
         \hline
         ERM + WD + LS & $6.90 \pm 0.26$ & $2280 \pm 2292$ &  $0.088 \pm 0.003$ \\
         \hline
         ERM + WD + Mixup & $\mathbf{5.23} \pm 0.18$ & $1701 \pm 1586$ & $0.054 \pm 0.002$ \\
         \hline
    \end{tabular}
    \caption{Final results (mean test error/variance and one standard deviation over 5 runs) for ResNet-101 experiments on CIFAR-10.}
    \label{tab:resnet101cifar10}
\end{table} 

\begin{table}[h!]
    \centering
    \begin{tabular}{|c|c|c|c|}
         \hline
         Method & Test Error &  Activation Variance & Output Variance \\
         \hline
         ERM + WD & $28.69 \pm 0.73$ & $283 \pm 18$ & $0.373 \pm 0.008$\\
         \hline
         ERM + WD + LS & $27.74 \pm 0.55$ & $12.01 \pm 0.23$ & $0.269 \pm 0.005$ \\
         \hline
         ERM + WD + Mixup & $\mathbf{23.55} \pm 0.98$ & $115 \pm 136$ & $0.157 \pm 0.01$\\
         \hline
    \end{tabular}
    \caption{Final results (mean test error/variance and one standard deviation over 5 runs) for ResNet-101 experiments on CIFAR-100.}
    \label{tab:resnet101cifar100}
\end{table} 

\subsection{Output Probability Variance Analysis}\label{sec:outvariance}
A natural explanation for why output probability variance decreases for label smoothing and Mixup in Figures \ref{fig:resnet18}, \ref{fig:resnet50}, and \ref{fig:resnet101} is that label smoothing and Mixup improve test error and consequently have less variability in their outputs due to fewer mistakes. Firstly, looking carefully at the label smoothing results in the previous subsections shows this cannot be the full cause, as in Table \ref{tab:resnet101cifar10} label smoothing leads to worse test error than the baseline of ERM + WD but still leads to lower output probability variance.

In Tables \ref{tab:resnet18targetprob} to \ref{tab:resnet101targetprob}, we compute the average output probability variance for each class but consider only the probability associated with the target class (i.e. for all points with label $y$, we compute the variance of the predicted probabilities corresponding to $y$). In all cases, the target output variance alone leaves a significant fraction of the overall output variance unexplained.

\begin{table}[h!]
    \centering
    \begin{tabular}{|c|c|c|}
         \hline
         Method & Target Output Variance (CIFAR-10) &  Target Output Variance (CIFAR-100) \\
         \hline
         ERM + WD & $0.066 \pm 0.004$ & $0.172 \pm 0.002$ \\
         \hline
         ERM + WD + LS & $0.050 \pm 0.002$ & $0.119 \pm 0.002$ \\
         \hline
         ERM + WD + Mixup & $0.029 \pm 0.002$ & $0.087 \pm 0.002$ \\
         \hline
    \end{tabular}
    \caption{Target output probability variance for ResNet-18.}
    \label{tab:resnet18targetprob}
\end{table} 

\begin{table}[h!]
    \centering
    \begin{tabular}{|c|c|c|}
         \hline
         Method & Target Output Variance (CIFAR-10) &  Target Output Variance (CIFAR-100) \\
         \hline
         ERM + WD & $0.059 \pm 0.001$ & $0.166 \pm 0.004$ \\
         \hline
         ERM + WD + LS & $0.046 \pm 0.002$ & $0.131 \pm 0.001$ \\
         \hline
         ERM + WD + Mixup & $0.027 \pm 0.002$ & $0.093 \pm 0.004$ \\
         \hline
    \end{tabular}
    \caption{Target output probability variance for ResNet-50.}
    \label{tab:resnet50targetprob}
\end{table} 

\begin{table}[h!]
    \centering
    \begin{tabular}{|c|c|c|}
         \hline
         Method & Target Output Variance (CIFAR-10) &  Target Output Variance (CIFAR-100) \\
         \hline
         ERM + WD & $0.054 \pm 0.001$ & $0.162 \pm 0.002$ \\
         \hline
         ERM + WD + LS & $0.044 \pm 0.001$ & $0.131\pm 0.002$ \\
         \hline
         ERM + WD + Mixup & $0.026 \pm 0.002$ & $0.088 \pm 0.004$ \\
         \hline
    \end{tabular}
    \caption{Target output probability variance for ResNet-101.}
    \label{tab:resnet101targetprob}
\end{table} 

\end{document}